\documentclass[letterpaper]{article}
\usepackage[utf8]{inputenc}
\usepackage[T1]{fontenc}
\usepackage{lmodern}
\usepackage{arxiv}
\usepackage{microtype}
\usepackage[authoryear,round]{natbib}
\setcitestyle{citesep={;},aysep={,},yysep={;}}
\usepackage{amsmath,amssymb}
\usepackage{amsthm}
\theoremstyle{plain}
\newtheorem{theorem}{Theorem}[section]
\newtheorem{lemma}[theorem]{Lemma}
\theoremstyle{definition}
\newtheorem{assumption}[theorem]{Assumption}
\usepackage{graphicx}
\usepackage{tikz}
\usepackage{booktabs}
\usepackage{algorithm}
\usepackage{algpseudocode}
\usepackage{float}
\usepackage{placeins}
\usepackage{needspace}
\usepackage{hyperref}
\usepackage{url}
\DeclareMathOperator{\relu}{ReLU}
\usepackage{soul}

\title{BitSparse: Compact Activation Storage for \texorpdfstring{$\relu$}{ReLU} Feed-Forward Networks}

\author{%
    Max Zhu\textsuperscript{1}\thanks{Preprint. Corresponding author: Max Zhu, \href{mailto:mz406@cam.ac.uk}{\texttt{mz406@cam.ac.uk}}.\newline Code: \url{https://github.com/Maxzhu123/Bitmask_Sparsity}.}\quad
    Bohang Sun\textsuperscript{1}\quad
    Pietro Lio\textsuperscript{1}\quad
    Jialin Yu\textsuperscript{2}\\
    {\normalfont\textsuperscript{1}University of Cambridge\qquad
    \textsuperscript{2}University of Oxford}
}
\date{}
\renewcommand{\shorttitle}{BitSparse: Compact Activation Storage}
\renewcommand{\headeright}{}
\renewcommand{\undertitle}{}
\hypersetup{
    pdftitle={BitSparse: Compact Activation Storage for ReLU Feed-Forward Networks},
    pdfauthor={Max Zhu, Bohang Sun, Pietro Lio, Jialin Yu},
    hidelinks
}

\begin{document}
\maketitle

\begin{abstract}
Reducing memory use in large language model (LLM) training is an ongoing challenge. Activation checkpointing saves memory by recomputation, while lossy compression changes the stored activations. The reemergence of sparse $\relu$ and $\relu^2$ activations creates a third opportunity: losslessly compressing the activation cache itself. We present BitSparse, a GPU-resident lossless compression format for sparse activations. Compression and reconstruction run in custom Triton kernels fused into an autograd function, shrinking the activation cache while recovering it exactly for the backward pass. Bfloat16 (BF16) and FP8 activations are supported. Optional sign-bit packing further increases compression up to close to the theoretical limit. We analyze its compression ratio and compare it to the theoretical optimum. BitSparse lowers per-token VRAM by 31\% and 30\% Modded-NanoGPT and Nemotron-H respectively, with overheads of no more than 1\% at practical sequence lengths. Additionally, a new preallocated buffer system avoids host synchronization during training when variable size tensors are allocated. 
\end{abstract}

\section{Introduction}
\label{sec:introduction}
\label{sec:training-memory}

The memory required to train large language models limits the batch sizes and sequence lengths that fit on an accelerator. Alongside weights, gradients, and optimizer states, backpropagation requires activations retained from the forward pass. Parameter-related storage is largely independent of input size, whereas saved activations and some temporary buffers grow with the number of tokens processed simultaneously. Reducing this input-dependent memory cost is therefore important even when the model weights already fit on the device.

Feed-forward layers are a natural target. They expand each token's hidden representation, typically by a factor of $3$--$5$, and retain intermediate values for computing weight gradients. While the original Transformer \cite{attention_all_need} used quadratic memory, more recent memory-efficient attention \citep{attention_memory, flash_attention} and state-space architectures such as Mamba \citep{mamba} are both linear in the number of tokens, so the feedforward layers make up a significant portion of the memory cost. 

\suppressfloats[t]
\begin{figure}[t]
    \centering
%
%
\begingroup
\definecolor{figInk}{HTML}{23364D}
\definecolor{figBlue}{HTML}{346A9A}
\definecolor{figTeal}{HTML}{287E80}
\definecolor{figValue}{HTML}{7853A0}
\definecolor{figSign}{HTML}{BD493D}
\definecolor{panelTrain}{HTML}{F5F0E7}
\definecolor{panelCache}{HTML}{EDF3FA}
\definecolor{panelGPU}{HTML}{ECF4EF}
\setlength{\fboxsep}{5pt}
%
\colorbox{panelTrain}{%
\begin{minipage}[t][5.2cm][t]{\dimexpr0.22\linewidth-2\fboxsep\relax}
    \vspace{0pt}\raggedright\footnotesize
    \parbox[t][0.45cm]{\linewidth}{\raggedright\textbf{Training}}\par
    \medskip
    \centering
    \begin{tikzpicture}[
        x=1cm, y=1cm,
        card/.style={draw=figBlue!45,fill=white,rounded corners=2pt,
                     line width=0.5pt},
        title/.style={font=\scriptsize\bfseries,text=figInk,
                      text width=2.1cm,align=center,inner sep=0pt},
        body/.style={font=\scriptsize,text=figInk,
                     text width=2.1cm,align=center,inner sep=0pt},
        micro/.style={font=\tiny,text=figInk!85,
                      text width=2.1cm,align=center,inner sep=0pt},
        bitOn/.style={draw=figBlue!65,fill=figBlue!7,minimum height=0.30cm,
                      inner sep=1pt,font=\ttfamily\tiny,line width=0.4pt},
        slotOn/.style={draw=figBlue!70,fill=figBlue!45,line width=0.3pt},
        slotOff/.style={draw=figInk!30,fill=figInk!10,line width=0.3pt}
    ]
        \draw[card] (-1.15,0) rectangle (1.15,-1.06);
        \node[micro] at (0,-0.28)
            {$\relu$ activations are cached};
        \node[body] at (0,-0.74) {$H=\relu(Z)$};
        \node[micro,text=figInk,text width=2.7cm] at (0,-1.30)
            {GPU memory is wasted:};
        \draw[card] (-1.15,-1.56) rectangle (1.15,-2.96);
        \node[title] at (0,-1.94)
            {(a) Activations:\\$\approx$93\% sparsity};
        \foreach \i in {1,3,4}{%
            \filldraw[slotOff] ({-1.08+\i*0.44},-2.28)
                rectangle ++(0.40,-0.28);
        }
        \foreach \i in {0,2}{%
            \filldraw[slotOn] ({-1.08+\i*0.44},-2.28)
                rectangle ++(0.40,-0.28);
        }
        \node[micro] at (0,-2.76) {storage is wasted};
        \draw[card] (-1.15,-3.06) rectangle (1.15,-4.45);
        \node[title] at (0,-3.44)
            {(b) Positive values\\have sign bit 0};
        \node[bitOn,minimum width=0.30cm] (sgn) at (-0.46,-3.92)
            {\textcolor{figSign}{0}};
        \node[bitOn,minimum width=0.86cm] (pay) at (0.18,-3.92) {0110110};
        \node[micro] at (0,-4.24) {the sign bit is redundant};
    \end{tikzpicture}
\end{minipage}%
}\hfill
%
\colorbox{panelCache}{%
\begin{minipage}[t][5.2cm][t]{\dimexpr0.54\linewidth-2\fboxsep\relax}
    \vspace{0pt}\raggedright\footnotesize
    \parbox[t][0.45cm]{\linewidth}{\raggedright\textbf{BitSparse Activation Compression}}\par
    \medskip
    \centering
    \resizebox{\linewidth}{!}{\input{figures/bitsparse_codec.tikz}}
    \par\vspace{0.05cm}%
    \noindent\makebox[\linewidth][c]{%
        \textcolor{white}{\rule{\dimexpr\linewidth+2\fboxsep\relax}{0.14cm}}}%
    \par\vspace{0.08cm}\par
    \raggedright
    \parbox[t][0.45cm]{\linewidth}{\raggedright\textbf{Measured results}}\par
    \medskip
    \centering
    \begin{tikzpicture}[
        x=1cm, y=1cm,
        stat/.style={font=\fontsize{10.5}{11.5}\selectfont\bfseries,
                     text=figBlue!85!black,inner sep=0pt,anchor=north,
                     align=center},
        key/.style={font=\tiny,text=figInk!85,inner sep=0pt,
                    anchor=north,align=center}
    ]
        \foreach \x/\val/\first/\second in {%
            -2.35/48\%/{memory reduction}/{in feedforward layers},
            0/30\%/{lower VRAM per token}/{Nemotron-H},
            2.35/97\%/{of attainable}/{decode bandwidth}} {
            \node[stat] at (\x,0) {\val};
            \node[key] at (\x,-0.42) {\first\\\second};
        }
    \end{tikzpicture}
\end{minipage}%
}\hfill
\colorbox{panelGPU}{%
\begin{minipage}[t][5.2cm][t]{\dimexpr0.22\linewidth-2\fboxsep\relax}
    \vspace{0pt}\raggedright\footnotesize
    \parbox[t][0.45cm]{\linewidth}{\raggedright\textbf{GPU kernels}}\par
    \medskip
    \centering
    \begin{tikzpicture}[
        x=1cm, y=1cm,
        card/.style={draw=figBlue!45,fill=white,rounded corners=2pt,
                     line width=0.5pt},
        title/.style={font=\scriptsize\bfseries,text=figInk,
                      text width=2.1cm,align=center,inner sep=0pt},
        body/.style={font=\tiny,text=figInk!85,
                     text width=2.1cm,align=center,inner sep=0pt}
    ]
        \draw[card] (-1.15,0) rectangle (1.15,-1.15);
        \node[title] at (0,-0.24) {Triton kernels};
        \node[body] at (0,-0.72) {Optimized kernels for \\ encoding and decoding};
        \draw[card] (-1.15,-1.31) rectangle (1.15,-2.46);
        \node[title] at (0,-1.55) {No host sync};
        \node[body] at (0,-2.03) {preallocated buffers;\\offsets stay on device};
        \draw[card] (-1.15,-2.62) rectangle (1.15,-3.77);
        \node[title] at (0,-2.86) {BF16 and FP8};
        \node[body] at (0,-3.34) {Support for low-precision datatypes};
        \node[body,text=figInk!75,text width=2.5cm] at (0,-4.12)
            {Lossless: the cache is\\recovered exactly};
    \end{tikzpicture}
\end{minipage}%
}
\endgroup
    \caption{Overview of BitSparse. Caching $\relu$ activations for backpropagation is expensive: most entries are exact zeros, and every stored value carries a redundant sign bit. BitSparse caches only the nonzero values and reconstructs them with optimized Triton kernels.}
    \label{fig:bitsparse-codec}
\end{figure}

Certain activations produce sparse outputs that can be exploited to reduce memory usage. The ReLU activation ($\relu$) maps nonpositive inputs to exact zeros \citep{relu}. Large language models (LLMs) have used alternative activations such as GELU \cite{gelu} and gated activations such as SwiGLU. However, recently squared ReLU ($\relu^2$) activations have demonstrated competitive efficiency in LLMs \citep{relu2} and are used in the Nemotron family and Modded-NanoGPT \citep{nemotron_h, nemotron_nano, nemotron_4, modded_nanogpt}. Sparsity naturally emerges in training without any sparsity-inducing objective \citep{lazy_neuron, relu_strikes_back}. In our measurements, the pretrained Nemotron-H model has approximately $92\%$ feed-forward activation sparsity.

Exploiting this redundancy requires an inexpensive enough representation to use during backpropagation with minimal latency. Alternatives such as activation checkpointing reduce storage by recomputing forward intermediates \citep{act_checkpointing}, while lossy activation compression trades accuracy for memory \citep{actnn, CompAct}. This raises a simple question: \textbf{can the exact zeros already produced by ReLU-family LLMs be converted into activation-memory savings without recomputation or approximation}?

We present \emph{BitSparse}, a GPU-native format for lossless compression of sparse activations using a bitmask and compact nonzero values. Custom Triton kernels compress activations during the forward pass and reconstruct them as needed during backward. A shared value buffer avoids host synchronization. BitSparse supports the $\relu$ and $\relu^2$ activations with optimized kernels for minimum overhead. Figure~\ref{fig:bitsparse-codec} illustrates the approach. Our contributions are:
\begin{itemize}
    \item A GPU activation-storage scheme for ReLU and squared ReLU, with tile-parallel reconstruction, integration into feed-forward backpropagation and a preallocated buffer to avoid host synchronization.
    \item An information-theoretic analysis of sparse activation storage, together with compression ratios and break-even sparsity thresholds for the implemented representation.
    \item Evaluations spanning activation sparsity, codec throughput, MLP comparisons, and language-model forward/backward workloads. BitSparse reduces peak allocated memory by up to $48\%$ in an MLP, and reduces the memory per token by approximately $31\%$ for Modded-NanoGPT and $30\%$ for Nemotron-H, with minimal runtime overhead.
\end{itemize}

\section{Background}
\label{sec:background}

\subsection{\texorpdfstring{$\relu$}{ReLU} activations and their gradients}
\label{sec:relu_networks}
We review backpropagation through a $\relu^2$ activation followed by a linear projection; the $\relu$ derivation is given in Appendix~\ref{app:relu-gradients}. The input is $Z \in \mathbb{R}^{M \times N}$ and the weight is $W \in \mathbb{R}^{D \times N}$, where $M$ is the number of activation rows (for example, tokens flattened across a batch), and $N$ and $D$ are the input and output feature dimensions, respectively. Let $\relu(z)=\max(0,z)$ act elementwise, $G=\partial\mathcal{L}/\partial Y \in \mathbb{R}^{M \times D}$ be the upstream gradient, and $\odot$ denote elementwise multiplication. For the forward pass
\begin{equation}
    H=\relu(Z),
    \qquad
    \tilde{H}=\relu^2(Z)=H^2,
    \qquad
    Y=\tilde{H}W^\top,
\end{equation}
the gradients are
\begin{equation} \label{eq:relu2_backward}
    \frac{\partial\mathcal{L}}{\partial W} = G^\top \tilde{H},
    \qquad
    \frac{\partial\mathcal{L}}{\partial \tilde{H}} = GW,
    \qquad
    \frac{\partial\mathcal{L}}{\partial Z} = 2\frac{\partial\mathcal{L}}{\partial \tilde{H}} \odot\mathbb{I}[Z>0] \odot Z,
\end{equation}
The activation-dependent information required for these gradients is contained in the positive values of $Z$ and their positions. The same principle applies to differentiable pointwise functions of $\relu(Z)$, such as $\relu^3(Z)$ or $\sin(\relu(Z))$, although these are less popular.

\subsection{Entropy of sparse \texorpdfstring{$\relu$}{ReLU} activations}
\label{sec:background-entropy}
We wish to losslessly compress the activations $H$. Let $L=MN$ be the number of logical entries, excluding padding, and let $s=\|H\|_0$ denote the number of nonzero values. We denote the zero fraction, or \emph{activation sparsity}, by $\rho = 1-s/L$. 
Let $d$ denote the storage width of the numerical format in bits, with $d=16$ for BF16 and $d=8$ for FP8. Each positive value has a redundant sign bit, leaving $d-1$ non-sign bits.

For independent symbols, the average lossless code length per symbol can approach the Shannon entropy for long sequences. The following idealized model is used to obtain a reference entropy.

\begin{assumption}[Abstract sparse payload model] \label{ass:activation_distribution}
    The entries of $H$ are independent and identically distributed abstract symbols. Each entry is the distinguished zero symbol with probability $\rho$, where $0<\rho<1$, and otherwise carries a uniformly distributed $(d-1)$-bit payload. The $2^{d-1}$ payload symbols are distinct from the zero symbol. 
\end{assumption}

All logarithms are base two and $h_2$ denotes the binary entropy function; Appendix~\ref{app:proofs} gives its definition together with the proof of the following result.
\begin{theorem}[Entropy of sparse activations]
\label{thm:sparse-entropy}
Under Assumption~\ref{ass:activation_distribution}, the entropy and optimum asymptotic compression ratio relative to dense $d$-bit storage are
\begin{equation}
    \mathcal{H}(H)=L\bigl[h_2(\rho)+(1-\rho)(d-1)\bigr],
    \qquad R(\rho)=\frac{dL}{\mathcal{H}(H)}.
    \label{eq:ideal-compression}
\end{equation}
\end{theorem}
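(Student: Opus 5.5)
We prove the statement in two stages: first compute the entropy of a single entry, then use independence to scale to all $L$ entries. Under Assumption~\ref{ass:activation_distribution}, a single entry $X$ takes the zero symbol with probability $\rho$ and each of the $2^{d-1}$ payload symbols with probability $(1-\rho)2^{-(d-1)}$. We compute $\mathcal{H}(X)=-\sum_x p(x)\log p(x)$ directly, splitting the sum into the zero term and the payload terms. The zero term contributes $-\rho\log\rho$. The $2^{d-1}$ equal payload terms together contribute
\begin{equation*}
-(1-\rho)\log\bigl((1-\rho)2^{-(d-1)}\bigr) = -(1-\rho)\log(1-\rho) + (1-\rho)(d-1).
\end{equation*}
Adding the two contributions and recognizing $h_2(\rho)=-\rho\log\rho-(1-\rho)\log(1-\rho)$ gives $\mathcal{H}(X)=h_2(\rho)+(1-\rho)(d-1)$.

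As a cross-check, we can also write $X$ as the pair (indicator $B=\mathbb{I}[X\neq 0]$, payload). Since $B$ is a function of $X$, the chain rule gives $\mathcal{H}(X)=\mathcal{H}(B)+\mathcal{H}(X\mid B)$. Here $\mathcal{H}(B)=h_2(\rho)$, and $\mathcal{H}(X\mid B)=(1-\rho)(d-1)$, because given $B=1$ the payload is uniform over $2^{d-1}$ symbols and given $B=0$ it is deterministic. The assumption that payload symbols are distinct from the zero symbol is what makes $B$ a function of $X$.

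Next, the entries of $H$ are i.i.d., so the joint entropy is additive: $\mathcal{H}(H)=\sum_{i=1}^{L}\mathcal{H}(X_i)=L\,\mathcal{H}(X)$. This yields the first formula in \eqref{eq:ideal-compression}.

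For the compression ratio, dense storage uses $dL$ bits. By Shannon's source coding theorem, the optimal expected lossless code length for $L$ i.i.d. symbols is $L\,\mathcal{H}(X)+o(L)$ (for example via block Huffman or arithmetic coding). The converse states that any uniquely decodable code has expected length at least $\mathcal{H}(H)$. Consequently the best achievable ratio of dense size to compressed size tends to $dL/\mathcal{H}(H)$, and this expression is independent of $L$.

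The main subtlety is in this last step: making precise that ``optimum asymptotic compression ratio'' means the limit of $dL$ divided by the optimal expected code length. Once that definition is fixed, the result follows from the standard achievability and converse bounds. We also note that $0<\rho<1$ together with $d\ge 2$ ensures $\mathcal{H}(H)>0$, so the ratio is well defined.
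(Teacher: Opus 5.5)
Your proposal is correct and matches the paper's proof: the paper uses exactly your ``cross-check'' chain-rule decomposition $\mathcal{H}(X)=\mathcal{H}(S)+\mathcal{H}(X\mid S)=h_2(\rho)+(1-\rho)(d-1)$, followed by additivity under independence, and your direct sum over the probability mass function is an equivalent one-line alternative. Your source-coding argument (achievability plus the converse) for $R(\rho)$ is more explicit than the paper, which simply takes $dL/\mathcal{H}(H)$ as the optimum ratio, citing the remark that lossless code length per symbol approaches the entropy.
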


The entropy per entry consists of the support entropy $h_2(\rho)$ and the expected nonzero payload cost $(1-\rho)(d-1)$. 
Lemma~\ref{lem:entropy-upper-bound} shows that this is an upper bound for actual nonnegative floating-point activations: restricting payloads to valid positive values, using nonuniform distributions, or introducing correlations across positions can only lower the entropy.

\section{Methodology}
\label{sec:methodology}

\subsection{Overview}
We propose \emph{BitSparse}, an activation compression method for $\relu$-like activations (e.g. $\relu^2$) that adopts the notation and storage conventions of Section~\ref{sec:background-entropy}. BitSparse stores only the nonzero values, together with small index buffers for reconstruction, so the activation cache shrinks while the feed-forward projection is still computed densely. Activations are compressed before caching and reconstructed during the backward pass, preserving values in the selected storage format. 

\subsection{Tile-wise sparse activation format}
\label{sec:method-tile-format}
BitSparse is a tile-based sparse representation for moderately sparse tensors. It partitions $H$ into a regular grid of $B \times B$ tiles, with $B=64$ by default, so the grid has
\begin{equation}
    G_M = \left\lceil \frac{M}{B} \right\rceil,
    \qquad
    G_N = \left\lceil \frac{N}{B} \right\rceil
    \label{eq:tile-grid}
\end{equation}
rows and columns. Tiles are ordered row-major, with tile $(u,v)$ identified by $t=uG_N+v$, giving $T=G_MG_N$ tiles. Boundary tiles are logically padded with zeros so every tile has $B^2=4096$ entries. The representation of $H$ consists of three arrays: a bitmask $\mathbf{m}$ with one bit per activation position (including padding), a compact value array $\mathbf{v}$ holding only positive values, and a 32-bit prefix array $\mathbf{p}$.
For a tile $t$, the bitmask marks the nonzero entries and the value array collects the corresponding positives; the full arrays concatenate the per-tile arrays in tile order, following row-major order within each tile,
\begin{align}
    \mathbf{m}_t &= \mathbb{I}[H_t>0] \in \{0, 1\}^{B \times B}, \\
    \mathbf{v}_t &= \operatorname{concat}\{h \in H_t : h > 0\} \in \mathbb{F}_d^{s_t}, \\
    \mathbf{m} &= \operatorname{concat}\{\mathbf{m}_t: t \in[0,\dots, T-1]\} \in \{0, 1\}^{TB^2}, \\
    \mathbf{v} &= \operatorname{concat}\{\mathbf{v}_t: t \in[0,\dots, T-1]\} \in \mathbb{F}_d^{s}.  \label{eq:value}
\end{align}
Here $\mathbb{F}_d$ denotes the storage format of Section~\ref{sec:background-entropy}, $s_t$ is the number of nonzero values in tile $t$, and $\sum_{t=0}^{T-1}s_t=s$. Together, $\mathbf{m}$ and $\mathbf{v}$ determine $H$ exactly. In memory the mask is packed into unsigned bytes, one bit per padded position, which avoids the eight bits per element of a PyTorch Boolean tensor.
To locate each tile's values directly, a prefix array holds their starting logical offsets,
\begin{equation}
    p_t=\sum_{q=0}^{t-1}s_q \quad (0\leq t<T),
    \qquad
    \mathbf{p} \in \mathbb{I}_{32}^T,
    \label{eq:prefix}
\end{equation}
where $\mathbb{I}_{32}$ is the set of 32-bit unsigned integers and the empty sum gives $p_0=0$. Consequently, each $64\times64$ tile requires 512 bytes of mask metadata and 32 bits of prefix metadata; the full sparse object additionally records the logical shape, tile geometry and stored dtype on the CPU.
\subsection{Sign-bit packing}
\label{sec:method-packing}
Because the output of $\relu$ activations is always nonnegative, our method can optionally remove the redundant sign bit. BF16 can be reduced to 15 bits and FP8 values to 7 bits. Zero entries are reconstructed as positive zero. We refer to BitSparse with sign-bit packing enabled as \emph{BitSparse-sb}.

GPU memory is byte-aligned, so a custom Triton kernel is written for packing individual bits. Let $u_i$ be the unsigned bit pattern of the $i$th compact value $v_i$ with its sign bit removed, and let $w_j$ be the $j$th output byte. The corresponding input index is $i=\lfloor 8j/(d-1)\rfloor$, with bit offset $k=8j\bmod(d-1)$ within $u_i$. The output byte is
\begin{equation} \label{eq:sign_bit_pack}
    w_j = \left[(u_i \gg k) \middle| \left(u_{i+1} \ll (d - k - 1)\right)\right] \;\&\; (11111111)_2.
\end{equation}
Here $i$ indexes compact values, $j$ indexes packed bytes, and $k$ is a bit offset; out-of-range input words are treated as zero. The sign-bit packing reads two words and writes one byte at a time without atomics for high speed.
Decoding works in an analogous way, also without atomics.


\subsection{GPU compression and reconstruction}
\label{sec:method-kernels}

Compression is performed entirely on the GPU in three stages. First, a Triton tile kernel loads one tile, evaluates $H_t>0$, constructs $\mathbf{m}$ and the count $s_t$. Second, a scan over the per-tile counts constructs the starting offsets $\mathbf{p}$ in Equation~\eqref{eq:prefix}. Third, a second Triton kernel expands each tile mask and scatters positive values into $\mathbf{v}$ according to Equation~\eqref{eq:value}. Optionally, a final Triton kernel strips the sign bits and packs values into bytes according to Equation~\eqref{eq:sign_bit_pack}. 

Reconstruction is performed by a single Triton kernel. Each tile identifies the location of its values using $p_t$ and loads the relevant values along with the bitmask. Next, a prefix sum is used to compute the location of each value in the final array. Finally, a masked gather is used to produce the decompressed tensor $H$, with sign-bit unpacking if necessary. Optionally, the values may be squared during decoding for the $\relu^2$ activation.

Two further kernels fuse the backward-pass operations. The $\relu$ backward pass requires only the support mask, so a single kernel multiplies the incoming gradient by that mask in place, without reconstructing the activation. The $\relu^2$ pass performs the same operation with an additional dependence on $Z$, again in a single fused kernel. Furthermore, our blockwise structure allows for computing the backward matmul (Equations~\ref{eq:relu-backward},~\ref{eq:relu2_backward}) in blocks to avoid materializing the full activation to reduce peak memory further.

\subsection{Shared value buffers and implementation details} \label{sec:method-buffer}
The size of the value buffer $\mathbf{v}$ depends on the data, so allocating it on the host requires reading the device-resident count $s$, which interrupts asynchronous kernel submission. To avoid this, BitSparse optionally appends the values of different activations to a \texttt{TensorBuffer}, a large preallocated block with device memory offsets. Compressing a tensor appends its values and saves the starting offset on the device; decoding reads from the buffer using that offset, implemented in the kernels as \texttt{val\_idx = tl.load(prefix\_ptr + tile\_id) + tl.load(vals\_offset\_ptr)}. This allows BitSparse to operate completely asynchronously. The buffer is reset at the end of every train step, and must be sized to not overflow. We recommend setting it as large as possible to use the unallocated memory. 

The buffer gives a latency improvement for small sizes which are CPU bound. Appendix~\ref{app:buffer-ablation} shows a $5.2\%$ runtime reduction at $32$ tokens, diminishing as batch size increases and becoming negligible beyond $128$ tokens. Furthermore, this allows BitSparse to work with \texttt{torch.compile}. 

\subsection{Sparse activation checkpointing for feed-forward layers}
\label{sec:method-ffn}
Section~\ref{sec:relu_networks} shows that the positive preactivation values and their positions contain the activation-dependent information required for backpropagation. We therefore cache a compressed representation of $\relu(Z)$ for both activation functions. For $\relu^2$, the unsquared activation is stored to avoid a more unstable square root and keep numerical consistency with the forward pass. 

Our method is implemented in PyTorch as a custom $\texttt{torch.autograd.Function}$ class that combines the $\relu$ or $\relu^2$ activation along with a linear matrix multiplication; Algorithm~\ref{alg:relu-linear} gives its forward and backward passes. Sign-bit packing and \texttt{TensorBuffer} are optional, with support for Bfloat16 and FP8 (both e5m2 and e4m3) with scaling \cite{fp8_scaling}. 

The overhead of compressing and decompressing is fairly minimal for typical LLM training. Besides using highly optimized Triton kernels, the overhead is asymptotically reduced as the LLM's hidden dimension increases. To see this, observe that the FLOP complexity for the linear layer $HW^\top$ is $\mathcal{O}(MND)$. Since the size of $H$ is $MN$, the ratio overhead from compressing $H$ to the matrix multiplication is $\mathcal{O}(1/D)$. 

\section{Theoretical compression analysis}
\label{sec:compression-analysis}

\begin{theorem}[BitSparse compression ratios]
\label{thm:bitsparse-compression}
For $B=64$ and negligible boundary padding and per-tensor overhead, the compression ratios without and with sign-bit packing are, respectively,
\begin{equation}
    R_{\mathrm{BitSparse}}\simeq\frac{d}{d(1-\rho)+1+1/128},
    \qquad
    R_{\mathrm{sb}}\simeq
    \frac{d}{(d-1)(1-\rho)+1+1/128}.
    \label{eq:bitsparse-asymptotic}
\end{equation}
\end{theorem}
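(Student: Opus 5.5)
The plan is a direct bit-count of the three arrays in Section~\ref{sec:method-tile-format}, divided into the dense cost $dL$. Dense storage of $H$ uses $dL$ bits. Under the hypothesis of negligible boundary padding we take $TB^2 = L$, so the mask $\mathbf{m}$ costs exactly $L$ bits, since it stores one bit per (padded) position.

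Next we count the prefix array. There is one 32-bit entry per tile and $T = L/B^2$ tiles, giving $32L/B^2$ bits. For $B=64$ this is $32L/4096 = L/128$ bits, which produces the $1/128$ term.

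For the value array, without packing $\mathbf{v}$ holds $s = (1-\rho)L$ entries of $d$ bits each, for $d(1-\rho)L$ bits. With sign-bit packing each value occupies $d-1$ bits. The packed stream then has $\lceil (d-1)s/8\rceil$ bytes by Equation~\eqref{eq:sign_bit_pack}. The rounding to a whole byte is at most $7$ bits per tensor, so it is absorbed into the ``negligible per-tensor overhead.'' This leaves $(d-1)(1-\rho)L$ bits.

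Finally, we sum the three contributions to get $L\bigl[d(1-\rho)+1+1/128\bigr]$ and $L\bigl[(d-1)(1-\rho)+1+1/128\bigr]$ respectively. Dividing $dL$ by each total and cancelling $L$ yields Equation~\eqref{eq:bitsparse-asymptotic}.

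The part needing care is stating precisely what is discarded under ``$\simeq$'':
\begin{itemize}
\item the padding excess $TB^2 - L$, which is $O(B(M+N))$ and is negligible relative to $L$ when $M,N \gg B$;
\item the byte rounding in the packed stream;
\item the CPU-side metadata (shape, tile geometry, dtype), which is $O(1)$;
\item any \texttt{TensorBuffer} offset, which is a single integer per tensor.
\end{itemize}
I would make the statement precise by showing that the exact ratio equals the displayed expression multiplied by $1+O(B/M + B/N + 1/L)$. Each discarded term is either $O(1)$ per tensor or proportional to the boundary area of the padded grid, so this bound follows.
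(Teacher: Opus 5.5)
Your proposal is correct and matches the paper's proof essentially step for step: the paper writes the exact storage as $S_{\mathrm{BS}}=cs+TB^2+32T+\Delta$ with $c\in\{d,d-1\}$, takes the padding factor $\alpha=TB^2/L\simeq1$ and $\Delta/L\simeq0$, and divides $dL$ by the resulting $c(1-\rho)+1+1/128$ bits per entry. Your explicit $1+O(B/M+B/N+1/L)$ error bound goes slightly beyond what the paper states, but it refines the same argument rather than changing it; note that the padding excess is strictly below $B(M+N)+B^2$, which is $O(B(M+N))$ once $M,N\ge B$.
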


To see this result, observe the storage format has three components. Each entry requires on average one mask bit, $32/B^2=1/128$ prefix bits and $(1-\rho)$ nonzero values at $d$ bits unpacked or $d-1$ bits packed. Dividing the dense cost $d$ by this gives the ratios above. A full derivation is given in Appendix~\ref{app:proofs}. This is exact when there no padding and alignment mismatches. 

Table~\ref{tab:theoretical-compression} compares the ratios with and without sign-bit packing against the ideal rates of Equation~\eqref{eq:ideal-compression}.
Compared with the ideal rate in Theorem~\ref{thm:sparse-entropy}, the BitSparse-sb has an excess cost per entry of
\begin{equation}
    \frac{S_{\mathrm{BS}}}{L}
    -\bigl[h_2(\rho)+(1-\rho)(d-1)\bigr]
    \simeq 1-h_2(\rho)+\frac{1}{128}.
    \label{eq:compression-gap}
\end{equation}
At $\rho=1/2$ the excess is exactly the prefix cost, $1/128$. Storage is reduced whenever sparsity exceeds $\rho>(1+1/128)/d$ without packing, or $\rho>1/[128(d-1)]$ with packing. These thresholds are small, roughly $6\%$ and $13\%$ for unpacked BF16 and FP8 and below $0.2\%$ once packed, so any realistic activation sparsity reduces storage. The benefit of sign-bit packing is most notable when there are many nonzero values, and almost entirely compensates for the extra bitmask storage.  

\begin{table}[t]
    \centering
    \caption{Theoretical compression ratios of BitSparse, BitSparse-sb, and the theoretical optimum under Assumption~\ref{ass:activation_distribution}. Boundary padding overheads are neglected.}
    \label{tab:theoretical-compression}
    \small
    \begin{tabular}{rcccccc}
        \toprule
        & \multicolumn{3}{c}{BF16} & \multicolumn{3}{c}{FP8} \\
        Sparsity $\rho$ & BitSparse & BitSparse-sb & Ideal & BitSparse & BitSparse-sb & Ideal \\
        \midrule
        $50\%$ & $1.776\times$ & $1.881\times$ & $1.882\times$ & $1.598\times$ & $1.775\times$ & $1.778\times$ \\
        $75\%$ & $3.195\times$ & $3.363\times$ & $3.508\times$ & $2.660\times$ & $2.901\times$ & $3.123\times$ \\
        $90\%$ & $6.135\times$ & $6.380\times$ & $8.126\times$ & $4.425\times$ & $4.684\times$ & $6.843\times$ \\
        \bottomrule
    \end{tabular}
\end{table}

\section{Related work}
\label{sec:related-work}

\textbf{Lossy activation compression and low-precision training.}
Multiple methods have investigated lossy compression for activations. ActNN \citep{actnn} uses randomized mixed-precision quantization and analyzes the resulting gradient variance. GACT \citep{gact} extends activation compression to generic network architectures, dynamically choosing tensor compression ratios. CompAct \citep{CompAct} uses random projections to store low-rank activations. 

Low precision datatypes such as FP8 halves the storage per value compared to BF16 and offers speed-ups with hardware support \citep{fp8_scaling}. COAT \citep{coat} targets both activation and optimizer-state memory through FP8 quantization. Hardware-accelerated 2:4 sparsity for $\relu^2$ activations has been investigated by \citet{24_sparsity} for training and inference where activations are pruned so only 2 of every 4 activations are nonzero. 

\textbf{Lossless memory reduction techniques.}
A few methods have explored lossless activation storage with $\relu$ activations. BitTrain \citep{bittrain} stores the nonzero activation values for training convolutional neural networks (CNNs) on CPU-based edge devices. cDMA \citep{cdma} exploits activation sparsity to reduce transfers between GPU and host memory during training for host activation caching, also for CNNs. Gist \citep{gist} compresses sparse activations with a 8-bit CSR format. BitSparse extends these methods in several ways, including the standard GPU training setup without host synchronization, dedicated fused kernels for low overhead, low precision support and sign-bit packing for higher compression. We view these as essential for adoption in the modern LLM training era. 

In addition, nvCOMP \citep{nvcomp} provides optimized codecs for general and floating data, which can be applied to activations. Activation checkpointing recomputes activations during backpropagation \citep{act_checkpointing}, sacrificing speed for lower memory. \citet{selective_recomputation} showed this to be effective in large LLM training setups with thousands of GPUs. 

\textbf{Activation sparsity for efficient computation.}
The lazy neuron study \citep{lazy_neuron} documents the emergence of sparse $\relu$ activations in trained transformers. Exploiting the sparsity in ReLU networks for inference has long been explored in CNNs \cite{scnn}. More recently, ReLU Strikes Back \citep{relu_strikes_back} investigated $\relu$-based LLMs, exploiting their activation patterns to reduce inference computation. These methods avoid loading weights corresponding to zero activations. Unfortunately, they do not generalize to training, different tokens activate different channels. 
\section{Experiments} \label{sec:experiments}

\subsection{Sparsity in LLMs}
\label{sec:llm-sparsity}
We first examine activation sparsity in the feed-forward layers of two LLMs using $\relu^2$: a 162M-parameter Modded-NanoGPT transformer \citep{modded_nanogpt} and the 8B-parameter Nemotron-H-8B-Base-8K model. These models differ in both size and architecture: Modded-NanoGPT uses transformer blocks, whereas Nemotron-H combines attention, Mamba, and MLP blocks. Full details are given in Appendix~\ref{app:llm-sparsity}.  

The Modded-NanoGPT model is trained following the Track 3 baseline procedure. Activation sparsity is measured on the validation set every 300 steps for every layer. Figure~\ref{fig:train-sparsity} shows an initial sparsity of approximately $\rho=0.5$, rapidly increasing to $\rho=0.82$. This phenomenon has been observed in other works \cite{24_sparsity}.

Next, we evaluate the pretrained Nemotron-H model. The mean activation sparsity is $0.9273$, with minimum batch-wise mean sparsity over $256$ evaluation batches of $0.9176$, corresponding to $92.73\%$ and $91.76\%$ sparsity, respectively. The sparsity is extremely consistent, with the minimum batch-wise mean sparsity only 0.97 percentage points below the mean, meaning activation compression can be used effectively.

\subsection{Benchmarking and Ablations}
BisSparse is benchmarked in Table~\ref{tab:compression-benchmark}, which measures encoding and decoding time for a 1 GiB tensor with $\rho=0.5$ and $\rho=0.9$, with and without sign-bit packing. Decoding reaches 97.4\% of the attainable bandwidth at $\rho=0.5$ without sign-bit packing, while encoding is slower at 60.9\%. BitSparse-sb is slower, since an additional kernel is needed. Higher sparsity reduces encoding time, although effective bandwidth utilization is lower since less data is moved. 

\begin{table}[t]
    \centering
    \caption{Timings for encoding and decoding a $1$~GiB sparse BF16 tensor ($65{,}536\times8{,}192$) with and without sign-bit packing. The bandwidth utilization (Memory BW) includes both read and write. Bandwidth utilization is compared with the attainable throughput measured with NVbandwidth, with the device's nameplate bandwidth ($936$~GB/s) in parentheses.}
    \label{tab:compression-benchmark}
    \small
    \setlength{\tabcolsep}{3pt}
    \begin{tabular}{crrrrrrr}
        \toprule
        Sign-bit & $\rho$ & \multicolumn{2}{c}{Time / ms} & \multicolumn{2}{c}{Rate / GiB/s} & \multicolumn{2}{c}{Memory BW / \%} \\
        packing & & Encode & Decode & Encode & Decode & Encode & Decode \\
        \midrule
        No  & 0.5 & 3.52 & 2.20 & 284.1 & 454.5 & 60.9 (50.9) & 97.4 (81.5) \\
        Yes & 0.5 & 7.03 & 2.79 & 142.2 & 358.4 & 29.9 (25.0) & 75.3 (63.0) \\
        No  & 0.9 & 3.31 & 2.22 & 302.1 & 450.5 & 52.3 (43.8) & 78.0 (65.3) \\
        Yes & 0.9 & 4.84 & 2.92 & 206.6 & 342.5 & 35.4 (29.6) & 58.7 (49.1) \\
        \bottomrule
    \end{tabular}
\end{table}

To assess the benefit of our custom kernels, we compare our Triton implementation with a pure PyTorch implementation of BitSparse on a $65{,}536\times8{,}192$ BF16 tensor. Both versions produce identical outputs. The Triton implementation encodes $7.59\times$ faster and decodes $7.91\times$ faster, while reducing peak allocated memory by $37.5\%$ and $21.9\%$, respectively (Table~\ref{tab:codec-implementation-ablation}, Appendix~\ref{app:codec-implementation-ablation}).

The fused backward gradient kernel is compared with the standard PyTorch version on a $16{,}000\times21{,}504$ BF16 tensor with $50\%$ sparsity. BitSparse reduces peak allocated memory by $46.8\%$ for $\relu$ and $60.9\%$ for $\relu^2$, while reducing runtime by $17.2\%$ and $39.2\%$, respectively. Appendix~\ref{app:relu-backward-ablation} gives the setup, and Table~\ref{tab:relu-backward-ablation} reports the results.

\subsection{Comparisons on training} \label{sec:mlp-training}
\begin{table}[t]
    \centering
    \caption{Peak allocated memory and time per step for the $\relu^2$ MLP benchmark, using BF16 and FP8 activations at sparsity $\rho=0.5$ and $0.9$. CompAct is a lossy reference. Bold and italic values indicate the lowest and second-lowest lossless results. Dashes indicate unavailable results. }
    \label{tab:mlp-training-relu2}
    \small
    \setlength{\tabcolsep}{2.5pt}
    \begin{tabular}{@{}lrrrrrrrr@{}}
        \toprule
        & \multicolumn{4}{c}{Memory / MiB} & \multicolumn{4}{c}{Time / ms} \\
        \cmidrule(lr){2-5}\cmidrule(l){6-9}
        & \multicolumn{2}{c}{BF16} & \multicolumn{2}{c}{FP8} & \multicolumn{2}{c}{BF16} & \multicolumn{2}{c}{FP8} \\
        Method & $0.5$ & $0.9$ & $0.5$ & $0.9$ & $0.5$ & $0.9$ & $0.5$ & $0.9$ \\
        \midrule
        Dense                 & 12346 & 11344 & 9492 & 9492 & \textit{1827} & \textit{1827} & \textit{395} & 395 \\
        BitSparse             & 8806 & \textit{5896} & \textit{7180} & \textit{6617} & \textbf{1822} & \textbf{1815} & \textbf{391} & \textit{391} \\
        BitSparse-sb          & \textit{8637} & \textbf{5860} & \textbf{7042} & \textbf{6551} & 1844 & 1828 & 397 & \textbf{391} \\
        Checkpoint            & \textbf{6713} & 6713 & 7417 & 7417 & 2144 & 2144 & 507 & 507 \\
        LZ4                   & 11086 & 8052 & 8777 & 8242 & 3565 & 2584 & 693 & 615 \\
        Zstd                  & 10064 & 7517 & 8071 & 7664 & 4937 & 3311 & 990 & 784 \\
        Cascaded              & 12207 & 8462 & 9157 & 8921 & 1957 & 2185 & 469 & 461 \\
        Bitcomp               & 12285 & 9154 & 8870 & 8693 & 1926 & 2144 & 461 & 452 \\
        CSR                   & -- & 15859 & -- & -- & -- & 2308 & -- & -- \\
        \midrule
        CompAct-2             & 14111 & 14767 & 10078 & 13284 & 2754 & 2754 & 516 & 516 \\
        CompAct-4             & 12581 & 13237 & 9238 & 12519 & 2037 & 2037 & 399 & 399 \\
        CompAct-8             & 11158 & 12472 & 8854 & 12138 & 1692 & 1692 & 346 & 346 \\
        \bottomrule
    \end{tabular}
\end{table}

\begin{table}[t]
    \centering
    \caption{Effect of batch size on peak allocated memory and time per step for the $\relu^2$ MLP at sparsity $\rho=0.9$. Bold and italic values indicate the lowest and second-lowest results, respectively.}
    \label{tab:mlp-batch-size}
    \small
    \setlength{\tabcolsep}{3pt}
    \begin{tabular}{@{}rrrrrrrrr@{}}
        \toprule
        Batch & \multicolumn{4}{c}{Memory / MiB} & \multicolumn{4}{c}{Time / ms} \\
        \cmidrule(lr){2-5}\cmidrule(l){6-9}
        size & Dense & Checkpoint & BitSparse & BitSparse-sb & Dense & Checkpoint & BitSparse & BitSparse-sb \\
        \midrule
          32 & 3054 & 2879 & \textit{2879} & \textbf{2879} & \textbf{12} & 14 & \textit{12} & 13 \\
         128 & 3094 & 2899 & \textit{2896} & \textbf{2896} & \textbf{18} & 22 & \textit{19} & 19 \\
         512 & 3264 & 2982 & \textit{2972} & \textbf{2971} & \textit{64} & 74 & \textbf{64} & 64 \\
        2000 & 3932 & 3294 & \textit{3252} & \textbf{3249} & 243 & 284 & \textbf{240} & \textit{241} \\
        8000 & 7107 & 4704 & \textit{4384} & \textbf{4368} & 929 & 1086 & \textbf{919} & \textit{926} \\
       32000 & 19811 & 10704 & \textit{8916} & \textbf{8846} & 3640 & 4268 & \textbf{3604} & \textit{3628} \\
        \bottomrule
    \end{tabular}
\end{table}

We benchmark activation storage on a stack of eight residual MLP blocks (Appendix~\ref{app:mlp-training}), with BF16 parameters, BF16 or FP8 activation storage, on a forward and backward pass. Optimizer stats are not included, since neither BitSparse or any baselines affect optimizer buffers. The comparisons cover $\relu$ and $\relu^2$ at sparsity $\rho=0.5$ and $0.9$ in both BF16 and FP8. The \texttt{TensorBuffer} is not used for these experiments so peak memory allocation can be measured.

Comparisons are made against activation checkpointing of each FFN block \cite{act_checkpointing}; lossless codecs as implemented by nvCOMP \cite{nvcomp}, LZ4 \cite{lz4}, Zstd \cite{zstd}, Bitcomp, and Cascaded; compressed sparse row (CSR); and CompAct \cite{CompAct}, a lossy reference. The peak allocated memory and time per step is recorded. Every method except CompAct produces identical gradients up to rounding. 

Table~\ref{tab:mlp-training-relu2} reports the $\relu^2$ results in both BF16 and FP8; Appendix~\ref{app:mlp-additional} gives the corresponding $\relu$ results. At $\rho=0.9$, BitSparse is the most memory-efficient lossless method, dropping VRAM usage by up to 48\% relative to dense storage.
Sign-bit packing reduces memory further, at a small cost in time. For BF16, checkpointing uses the least memory at $\rho=0.5$, but BitSparse overtakes it at $\rho=0.9$. For FP8, BitSparse-sb uses the least memory at both sparsity levels. Checkpointing carries a substantially larger runtime penalty. The other lossless compression methods either have lower compression than BitSparse or run substantially slower. 

To further investigate the memory--runtime trade-off between BitSparse and checkpointing, we vary the number of compressed layers in our model from 0 to 8. Figure~\ref{fig:mlp-layer-pareto} shows the frontier for $\relu^2$ at $\rho=0.9$. Appendix~\ref{app:mlp-layer-pareto} gives the other three configurations. BitSparse dominates the Pareto-frontier, particularly at higher sparsity levels. Also visible is the drop in VRAM on the last data point caused by lower transient memory usage from our fused pointwise gradient kernel. Furthermore, at high sparsity, our $\relu^2$ kernel is faster than the standard implementation. 

We also vary the batch size from $32$ to $32{,}000$ (Table~\ref{tab:mlp-batch-size}). As the batch grows, the activations that BitSparse compresses account for more memory so memory savings increase from $6\%$  at $32$ rows to $55\%$ at $32{,}000$. Runtime stays within $1\%$ of dense storage from $512$ rows upward. BitSparse therefore occupies a different point on the memory–compute frontier from checkpointing: its advantage grows with naturally occurring activation sparsity, while avoiding recomputation.

\begin{figure}[t]
    \centering
    \includegraphics[width=0.5\linewidth]{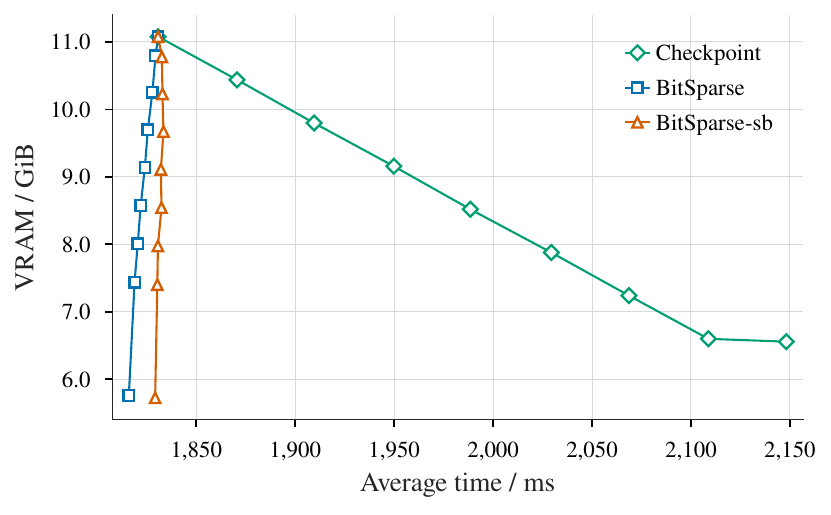}
    \caption{Memory--runtime trade-off for the $\relu^2$ MLP at sparsity $\rho=0.9$, varying the number of compressed or checkpointed layers from $0$ to $8$. Each point represents one layer-count configuration, lines connect configurations in layer-count order.}
    \label{fig:mlp-layer-pareto}
\end{figure}

\subsection{Memory and runtime in LLMs}
\label{sec:llm-memory-runtime}

As a larger training example, we evaluate peak VRAM usage and per-step latency when training Modded-NanoGPT and Nemotron-H across a range of sequence lengths (Appendix~\ref{app:llm-memory-runtime}), comparing dense activation storage, BitSparse(-sb), and activation checkpointing. Again, no optimizer buffer is used. 
A numerical equality check confirms that all methods reproduce the gradients up to floating-point rounding. 

Figures~\ref{fig:llm-peak-memory} and~\ref{fig:llm-step-time} report the memory and runtime results, respectively. Across the measured token counts, VRAM per token on Modded-NanoGPT decreases from 0.402 MiB to 0.277 MiB with BitSparse and 0.276 MiB with BitSparse-sb. On Nemotron-H, the corresponding values are 2.719, 1.894, and 1.890 MiB. This corresponds to an asymptotic VRAM reduction of 31\% on Modded-NanoGPT and 30\% on Nemotron-H. BitSparse incurs a negligible performance penalty. Checkpointing uses less memory but generally incurs a notable runtime cost. Appendix~\ref{app:nemotron_memory} and Table~\ref{tab:nemotron-activation-memory} give a detailed breakdown of the memory used per token for Nemotron-H.

\begin{figure}[t]
    \centering
    \includegraphics[width=0.49\linewidth]{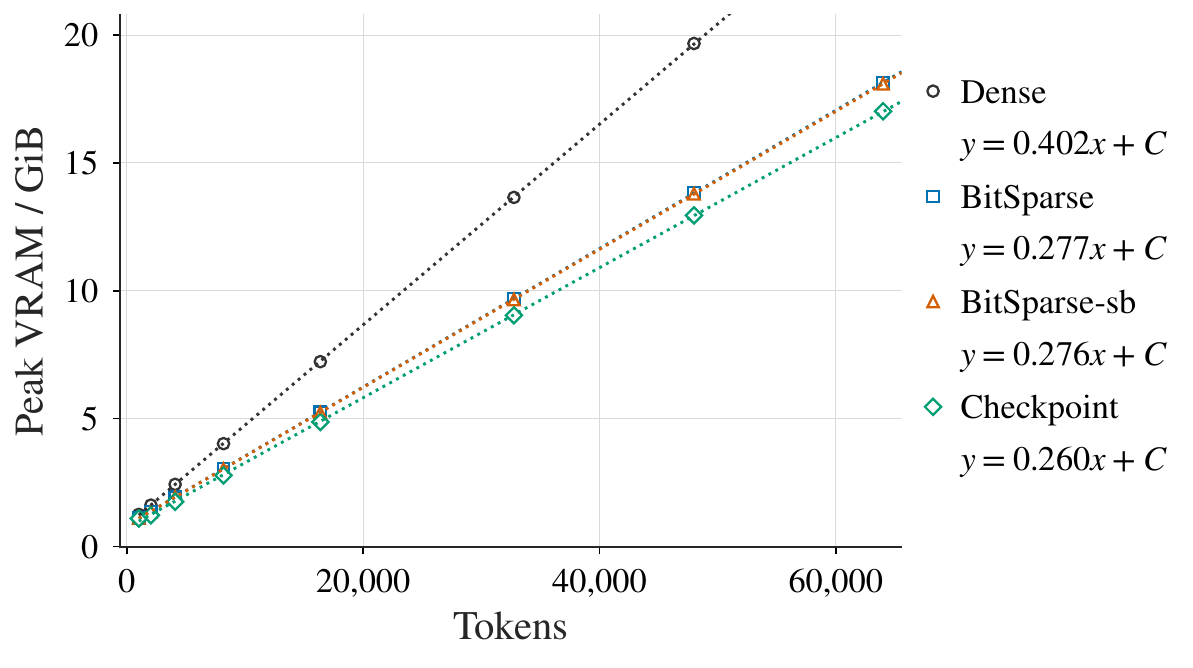}\hfill
    \includegraphics[width=0.49\linewidth]{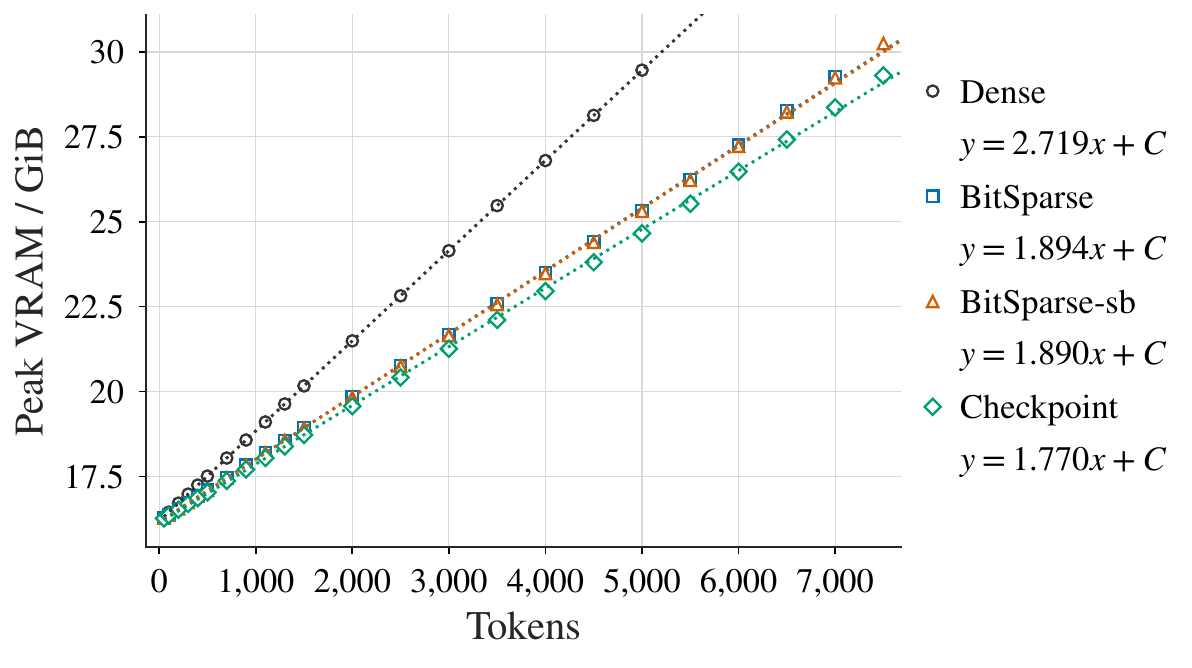}
    \caption{Peak GPU memory for Modded-NanoGPT (left) and Nemotron-H (right) versus the number of tokens processed per step. Markers denote measurements, dotted curves are least-squares linear fits. Legend entries give the fitted slope in units of MiB per token.}
    \label{fig:llm-peak-memory}
\end{figure}

\begin{figure}[t]
    \centering
    \includegraphics[width=0.49\linewidth]{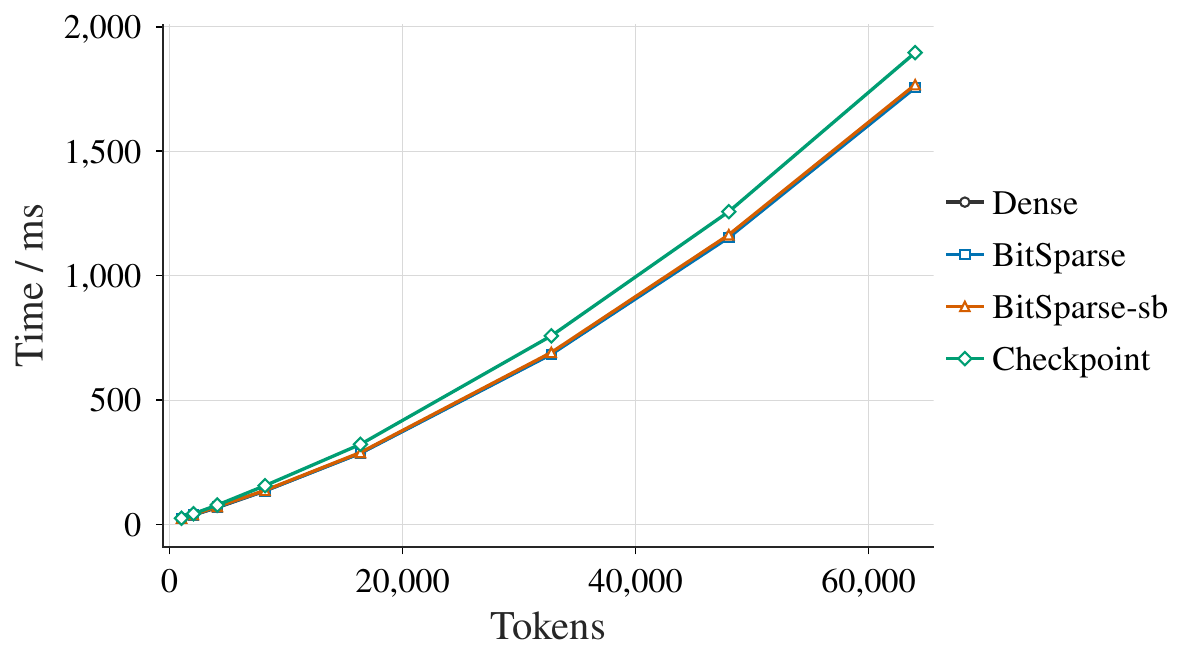}\hfill
    \includegraphics[width=0.49\linewidth]{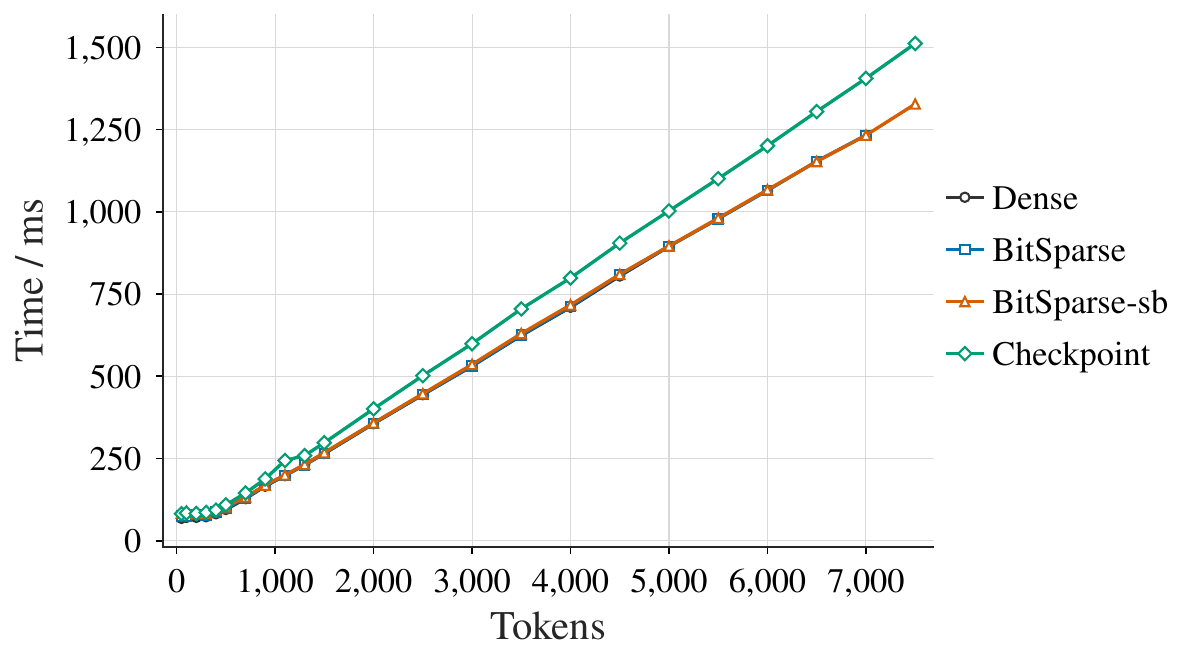}
    \caption{Step time for Modded-NanoGPT (left) and Nemotron-H (right) versus the number of tokens processed per step. Conventions follow Figure~\ref{fig:llm-peak-memory}.}
    \label{fig:llm-step-time}
    \vspace{-5mm}
\end{figure}

\subsection{Full training comparison}
\label{sec:full-training}
To demonstrate the full recipe, Modded-NanoGPT is trained with BitSparse-sb and TensorBuffer over the full $3{,}350$ training steps. Figure~\ref{fig:nanogpt-train-loss} in Appendix~\ref{app:nanogpt-microbatch} shows that the validation loss remains virtually identical to that of the original recipe, ending at $3.2778$ for standard training and $3.2780$ for BitSparse-sb, a difference of $0.0002$, both below the target of $3.28$.

Table~\ref{tab:nanogpt-microbatch} in Appendix~\ref{app:nanogpt-microbatch} compares standard and BitSparse training at microbatch sizes of $32$ and $64$. At size $32$, BitSparse is slightly slower ($7.668$ versus $7.428$ seconds per step). Standard training runs out of memory at size $64$, whereas BitSparse supports the larger microbatch and reduces its step time to $7.416$ seconds, slightly faster than standard training at size $32$.

\section{Conclusion}
\label{sec:conclusion}

We presented \emph{BitSparse}, a GPU-resident tile-wise sparse representation for $\relu$ and $\relu^2$ feed-forward activations that is lossless and can be used as a drop-in replacement to reduce memory usage. It is fast, using optimized fused Triton kernels, with overhead typically less than $1\%$ for large sequence lengths while reducing memory by up to $30\%$. Because this sparsity arises naturally, BitSparse converts existing exact zeros into memory savings without approximation. BitSparse supports both BF16 and FP8 formats and can operate without host synchronization. 

BitSparse depends on the continued success of $\relu$-family activations, since its gains rest on exact structural zeros. The scope is limited to feed-forward activations; attention and Mamba layers are not affected. Future work should extend the representation to gated activations and co-design it with checkpointing, offloading, and distributed training.

\subsection*{AI use statement}
Generative AI assisted with proofreading this paper. The Triton kernels were optimized using AI tools for maximum performance. A simple Autoresearch \cite{autoresearch} prompt was used in combination with manual inspection and direction. Our prompt is included in the supplementary materials. After the kernels were optimized, the rest of the supporting software (including PyTorch autograd integration, experimentation and accuracy validation) was written manually with minor help from AI tooling. Experiments were run manually to ensure accuracy.

\subsection*{Reproducibility statement}
The supplementary materials include the BitSparse implementation (\texttt{lib\_sparse}), benchmark scripts (\texttt{experiments}), and language-model experiments (\texttt{nanogpt} and \texttt{nemotron}).

Section~\ref{sec:methodology} describes the storage format, GPU kernels, and shared buffer; Algorithm~\ref{alg:relu-linear} gives the forward and backward passes. Appendix~\ref{app:proofs} contains the proofs of the entropy and compression results. Appendix~\ref{app:implementation} specifies hardware, software versions, random seeds, and baselines. Appendices~\ref{app:mlp-training} and~\ref{app:llm-memory-runtime} detail the MLP and language-model workloads, precision settings, and timing and memory measurements. Appendix~\ref{app:llm-sparsity} describes the data and aggregation used to measure activation sparsity, and Appendix~\ref{app:additional} provides the full-training comparison, additional benchmark results, and ablations.

\subsection*{Ethics statement}
BitSparse compresses intermediate activations during model training and does not itself collect data or introduce a new data source. Reducing GPU memory requirements may make larger training workloads more accessible. Beyond this, any privacy, bias, and misuse risks depend on the use case, rather than BitSparse. Its environmental impact depends on the memory savings and runtime overhead of the complete training workload.

\bibliography{references}
\bibliographystyle{plainnat}

\clearpage
\appendix
\section{Memory usage during training} \label{app:training-memory}
\paragraph{Reference model and assumptions.}
Here, we describe in more detail the memory usage involved in training LLMs. 

We assume Bfloat16 storage is used everywhere for this analysis (2 bytes per element). Let $P$ be the number of parameters in a model, $W$ be its width, $D$ be its depth and FFN expansion factor $r$. For LLM training, let the sequence length be $n$ and total number of tokens be $N$, equal to sequence length times batch size.

The model parameters require storage of $2P$ bytes. The optimizer states require either a buffer for every parameter, like Muon \cite{muon} and SGDM, or 2 buffers for optimizers such as Adam \cite{adam}. Hence at least $2P$ bytes are needed. If gradients are all stored, an additional $2P$ bytes of storage are needed. However, gradient overhead can be removed by fusing the optimizer into the backward step, where the momentum buffers are updated as soon as gradients are computed, removing the need for gradient storage \cite{lomo}. Hence, in this setup for LLM training or finetuning with Muon, a static memory overhead of $4P$ is required.

The rest of the memory used is for the activations. These scale linearly with input size. While attention complexity is the square of sequence length, the peak memory required is only linear \cite{attention_memory}. Furthermore, Flash attention \cite{flash_attention} provides a highly optimized attention implementation, where the intermediate computations are recomputed in the backward pass rather than stored. This reduces the cache size to $\mathcal{O}(N)$. State-space methods such as Mamba \cite{mamba} also use fused checkpointed kernels for the backward pass for good memory efficiency without saving the full intermediate computation graph. The storage used is $\mathcal{O}(DWN)$.

The feedforward layers make up the other major portion of the activation cache. A feedforward block consists of an up projection, activation and down-projection, $y = \operatorname{activation}(x W_1^\top) W_2^\top$. This projection to the high dimension activation requires a large amount of storage, since its hidden dimension is $rW$. The input to the FFN layer is cached. Since the activation is pointwise, it can be recomputed leading to total memory usage $O(D(r+1)WN)$. 

Finally, we have pointwise and normalization operations, such as RMS and LayerNorm. These pointwise operations can be efficiently checkpointed with fused kernel implementations; hence these layers do not require significant storage. 

\subsection{Nemotron-H memory usage} \label{app:nemotron_memory}

We will use Nemotron-H \citep{nemotron_h} as a reference for size and shapes. 
Its configuration contains $52$ blocks: $24$ MLP, $24$ Mamba, and $4$ attention blocks, with hidden width $4096$, MLP width $21{,}504$ (an expansion factor of 5.25), and vocabulary size $131{,}072$, totaling approximately $8.1$ billion parameters that occupy $15.089$ GiB in Bfloat16. In this paper, we will use BF16 precision for training for simplicity.  

\paragraph{Activations and intermediate tensors.}
Unless stated otherwise, the tensor sizes below are for a micro-batch of $4096$ tokens, so that they can be compared directly with the per-token figures in Table~\ref{tab:nemotron-activation-memory}. A single BF16 residual-width tensor occupies $0.03125$ GiB at this length, whereas an expanded MLP tensor occupies $0.16406$ GiB.
Keeping one expanded tensor for each of the $24$ MLPs costs $3.9375$ GiB. A conventional $\relu^2$ backward graph may retain both the unsquared $\relu$ output and the squared tensor consumed by the down-projection, requiring two such sets ($7.875$ GiB), in addition to the MLP inputs and normalization intermediates. The precise saved set depends on fusion and checkpointing.

Mamba's projected gate, convolution inputs, and time-step values span $18{,}560$ features per token; one projected tensor per Mamba block totals $3.398$ GiB. Convolution outputs, scan intermediates, and normalization can require additional storage, depending on the kernel and chunking strategy. Attention Q, K, and V together span $6144$ features per token, totaling $0.188$ GiB across four blocks. Materializing one BF16 attention-score tensor per block would add $4$ GiB at length $4096$, or $16$ GiB at length $8192$; memory-efficient attention avoids retaining the full score matrices. Finally, the vocabulary projection produces $1$ GiB of BF16 logits at $4096$ tokens, and the implementation casts these to FP32, whose footprint is $2$ GiB. Loss computation and the cast can introduce further temporary allocations.

\begin{table}[t]
    \centering
    \caption{Analytical activation storage per token for Nemotron-H-8B-Base-8K, comparing dense storage, checkpointing of each feed-forward block, BitSparse, and BitSparse with sign-bit packing. Sizes are computed from the model's tensor shapes, not measured, and cover the forward-pass tensors retained for the backward pass. Components below $1$~KiB per token are grouped under \emph{Other}. Values are rounded to the nearest whole KiB, so columns may not sum exactly.}
    \label{tab:nemotron-activation-memory}
    \small
    \setlength{\tabcolsep}{4pt}
    \begin{tabular}{lrrrr}
        \toprule
        & \multicolumn{4}{c}{Storage / (KiB per token)} \\
        \cmidrule(l){2-5}
        Activation & Dense & Checkpoint & BitSparse & BitSparse-sb \\
        \midrule
        Raw inputs retained for block/final norms & 424 & 424 & 424 & 424 \\
        Mamba packed projection (\texttt{gate}, \texttt{x}, \texttt{B}, \texttt{C}, \texttt{dt}) & 870 & 870 & 870 & 870 \\
        Mamba scan output & 384 & 384 & 384 & 384 \\
        FlashAttention Q/K/V & 48 & 48 & 48 & 48 \\
        FlashAttention output & 32 & 32 & 32 & 32 \\
        FFN unsquared $\relu$ values & 1008 & 0 & 81 & 76 \\
        FFN masks, prefixes and offsets & 0 & 0 & 63 & 63 \\
        \emph{Other} (RMS and attention statistics, metadata) & 2 & 1 & 2 & 2 \\
        \midrule
        Total & 2768 & 1759 & 1905 & 1899 \\
        \bottomrule
    \end{tabular}
\end{table}

\paragraph{Implementation optimizations.}
Two lossless changes to the Nemotron-H implementation keep its measured footprint close to the model's intrinsic memory use. The RMS normalization is fused into a single kernel, so its intermediate statistics and normalized values are recomputed during the backward pass rather than retained. The vocabulary projection head is evaluated in chunks along the vocabulary dimension, so the full FP32 logits tensor described above is never materialized at once and only the chunk being processed is live. Neither change alters the computation performed, only where its intermediates are stored, so the results match the unfused formulation up to floating-point rounding. Both are applied to the dense baseline as well as to BitSparse, so the comparisons in Section~\ref{sec:llm-memory-runtime} reflect the choice of activation storage rather than incidental implementation differences.

\paragraph{Effect of activation compression.}
Using the mean observed sparsity $\rho=0.9273$ as an illustrative cache sparsity, Equation~\eqref{eq:bitsparse-asymptotic} reduces one set of expanded BF16 MLP activations from $3.9375$ GiB to approximately $0.534$ GiB without sign-bit packing or $0.516$ GiB with packing. These estimates include $0.246$ GiB of masks and $0.00192$ GiB of tile-prefix metadata. Using the minimum batch-wise mean sparsity $\rho=0.9176$ instead gives $0.572$ and $0.552$ GiB, respectively. The sparsity measurements were taken after squaring, while the cache stores unsquared $\relu$ values, so these are estimates rather than exact cache measurements. Reconstruction can still require a dense temporary, and the attention, Mamba, residual, and output-head tensors are not compressed by this method.

\paragraph{Scaling and relation to the benchmark.}
At fixed precision, parameter-related storage is independent of input length and microbatch size. The linear-width activation footprints above scale by $M/4096$, where $M$ counts all tokens in the microbatch; materialized attention scores instead scale with batch size times squared sequence length. Wider dense projections increase parameter counts approximately quadratically, while the listed feature tensors grow linearly with width when expansion ratios and head dimensions are fixed. The measurements in Appendix~\ref{app:llm-memory-runtime} omit optimizer updates and release parameter gradients as they are produced. They therefore exclude optimizer state and a retained full-model gradient buffer, but transient gradients and operator workspaces still contribute to the measured peak. Inference KV and recurrent-state caches are disabled for these forward/backward runs. Since tensor lifetimes overlap only partially and allocator reservation may exceed live storage, the rows above should not be summed into a predicted peak-memory total.

\section{Analysis of storage usage} \label{app:proofs}

\paragraph{Binary entropy function.}
All logarithms are base two. The binary entropy function is
\begin{equation}
    h_2(\rho)=-\rho\log_2\rho-(1-\rho)\log_2(1-\rho),
    \label{eq:binary-entropy}
\end{equation}
with the convention $0\log_2 0=0$ at the endpoints.

\begin{proof}[Proof of Theorem~\ref{thm:sparse-entropy}]
For an abstract entry $h$, let $S$ indicate whether it is a payload symbol rather than the distinguished zero symbol. Since $S$ is determined by $h$, the entropy chain rule gives
\[
    \mathcal{H}(h)
    =\mathcal{H}(S)+\mathcal{H}(h\mid S)
    =h_2(\rho)+(1-\rho)(d-1),
\]
because $h_2(1-\rho)=h_2(\rho)$, the zero symbol is deterministic and each payload has $d-1$ independent uniform bits. Independence of the entries makes their entropies additive, yielding the result.
\end{proof}

\begin{lemma}[The uniform-bit model is an upper bound]
\label{lem:entropy-upper-bound}
Let $X$ be an actual activation drawn from an arbitrary distribution over the finite nonnegative values representable in a $d$-bit format, with $\Pr(X=0)=\rho$, and let $\mathcal{V}$ denote the set of representable finite positive values. Then
\begin{equation}
    \mathcal{H}(X)
    \le h_2(\rho)+(1-\rho)\log_2\lvert\mathcal{V}\rvert
    \le h_2(\rho)+(1-\rho)(d-1).
    \label{eq:entropy-upper}
\end{equation}

Moreover, for any joint distribution $(X_1,\dots,X_L)$ whose entries share these marginals,
\begin{equation}
    \mathcal{H}(X_1,\dots,X_L)
    \le L\bigl[h_2(\rho)+(1-\rho)(d-1)\bigr],
    \label{eq:entropy-joint-upper}
\end{equation}
which is the value in Theorem~\ref{thm:sparse-entropy}.
\end{lemma}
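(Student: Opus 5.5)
The plan is to mirror the chain-rule argument used for Theorem~\ref{thm:sparse-entropy}, replacing the exact uniform-payload computation with the maximum-entropy bound for a finite alphabet. Let $S=\mathbb{I}[X>0]$. Since $S$ is a deterministic function of $X$, the chain rule gives
\[
    \mathcal{H}(X)=\mathcal{H}(S)+\mathcal{H}(X\mid S)
    =h_2(\rho)+\rho\,\mathcal{H}(X\mid S=0)+(1-\rho)\,\mathcal{H}(X\mid S=1).
\]
Conditioned on $S=0$, $X$ is the single value $0$, so that term vanishes. We treat $+0$ as the unique zero, as in the reconstruction convention; if $-0$ were allowed it would have to be excluded, since the activations are nonnegative. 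Conditioned on $S=1$, $X$ is supported on $\mathcal{V}$, so the standard bound $\mathcal{H}\le\log_2$ of the support size gives $\mathcal{H}(X\mid S=1)\le\log_2|\mathcal{V}|$. This yields the first inequality in \eqref{eq:entropy-upper}. The edge cases $\rho\in\{0,1\}$ are handled by the convention $0\log_2 0=0$, with the undefined conditional term multiplied by zero.

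For the second inequality, I would count representable finite positive values. Every such value has sign bit $0$, so its bit pattern lies among the $2^{d-1}$ patterns with that sign bit. Positive zero, infinities and NaNs are excluded, so $|\mathcal{V}|\le 2^{d-1}$ and hence $\log_2|\mathcal{V}|\le d-1$. In fact the inequality is strict for BF16 and FP8 formats. The only point requiring care is that $\mathcal{V}$ is defined through bit patterns with the sign bit cleared, so the injection into $\{0,1\}^{d-1}$ is immediate.

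For the joint bound \eqref{eq:entropy-joint-upper}, I would apply subadditivity of entropy, $\mathcal{H}(X_1,\dots,X_L)\le\sum_{i}\mathcal{H}(X_i)$, which follows from the chain rule together with the fact that conditioning reduces entropy. Bounding each marginal term by the single-entry result gives $L[h_2(\rho)+(1-\rho)(d-1)]$. This matches Theorem~\ref{thm:sparse-entropy}, where equality holds under independence and a uniform payload.

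I do not expect a genuine obstacle. The mildest subtlety is the format bookkeeping in the second step, namely ensuring that the zero value and the positive payloads are disjoint and that no positive value uses the sign bit.
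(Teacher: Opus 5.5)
Your proposal is correct and follows the paper's proof step for step. Both use the chain rule on $S=\mathbb{I}[X>0]$ with a deterministic $S=0$ branch, the support-size bound $\mathcal{H}(X\mid S=1)\le\log_2\lvert\mathcal{V}\rvert$, the sign-bit count $\lvert\mathcal{V}\rvert\le 2^{d-1}$, and subadditivity for the joint bound; your remarks on the endpoints $\rho\in\{0,1\}$ and on treating $\pm 0$ as the single value $0$ are minor refinements the paper leaves implicit.
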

\begin{proof}
Let $S=\mathbb{I}[X>0]$, so that $\Pr(S=0)=\rho$ and $\mathcal{H}(S)=h_2(\rho)$. Since $S$ is determined by $X$, the chain rule gives
\[
    \mathcal{H}(X)=\mathcal{H}(S)+\mathcal{H}(X\mid S)
    =h_2(\rho)+(1-\rho)\,\mathcal{H}(X\mid S=1),
\]
where the conditional term retains the factor $1-\rho$ because $X$ is identically zero when $S=0$, a deterministic event. Conditional on $S=1$, the value $X$ lies in $\mathcal{V}$, and the entropy of any distribution supported on a finite set is at most the logarithm of its cardinality, so $\mathcal{H}(X\mid S=1)\le\log_2\lvert\mathcal{V}\rvert$. A $d$-bit format admits $2^{d}$ bit patterns, exactly $2^{d-1}$ of which carry a zero sign bit, and every positive value is one of those, so $\lvert\mathcal{V}\rvert\le 2^{d-1}$ and $\log_2\lvert\mathcal{V}\rvert\le d-1$. This gives Equation~\eqref{eq:entropy-upper}.

Entropy is subadditive, so
\[
    \mathcal{H}(X_1,\dots,X_L)\le\sum_{\ell=1}^{L}\mathcal{H}(X_\ell)
    \le L\bigl[h_2(\rho)+(1-\rho)(d-1)\bigr],
\]
which is Equation~\eqref{eq:entropy-joint-upper}. Subadditivity is an equality if and only if the entries are independent. The per-entry bound using $\log_2\lvert\mathcal{V}\rvert$ is attained when the positive values are uniform over $\mathcal{V}$; the looser bound using $d-1$ is strict for actual floating-point values.
\end{proof}

\paragraph{Remarks.}
Lemma~\ref{lem:entropy-upper-bound} separates the three relaxations that Equation~\eqref{eq:entropy-upper} permits. The support need not be full: $\lvert\mathcal{V}\rvert$ may be smaller than $2^{d-1}$ if some bit patterns are never produced. The positive values need not be uniform. Individual entries need not be independent, and subadditivity charges no penalty for dependence. Each relaxation can only lower the entropy, so the uniform, independent model of Assumption~\ref{ass:activation_distribution} is the worst case for compression and therefore a valid reference rate.

The bound is in fact strict. Zero is itself one of the $2^{d-1}$ patterns with a zero sign bit, so $\lvert\mathcal{V}\rvert\le 2^{d-1}-1$ and $\log_2\lvert\mathcal{V}\rvert<d-1$; formats that reserve patterns for infinities or NaN have still fewer finite positive values. The reference entropy in Theorem~\ref{thm:sparse-entropy} therefore exceeds the actual tensor entropy by at least $L(1-\rho)\bigl[(d-1)-\log_2\lvert\mathcal{V}\rvert\bigr]$. This gap is exact when entries are independent and their positive values are uniform over $\mathcal{V}$; nonuniformity or dependence can increase it. The corresponding per-entry gap omits the factor $L$. We use the reference bound because it depends only on the format width.

\begin{proof}[Proof of Theorem~\ref{thm:bitsparse-compression}]
The representation in Section~\ref{sec:method-tile-format} stores $s=(1-\rho)L$ positive values, a $TB^2$-bit mask, and $32T$ bits of prefix metadata. Writing $c=d$ without packing and $c=d-1$ with packing, its occupied storage is exactly
\begin{equation}
    S_{\mathrm{BS}}=cs+TB^2+32T+\Delta,
    \qquad
    R_{\mathrm{BS}}=\frac{dL}{S_{\mathrm{BS}}},
    \label{eq:bitsparse-storage}
\end{equation}
where $\Delta$ includes alignment, guard storage, and per-tensor metadata, including an optional terminal prefix entry. With padding factor $\alpha=TB^2/L$, the cost per logical entry is
\begin{equation}
    \frac{S_{\mathrm{BS}}}{L}
    =c(1-\rho)+\alpha\left(1+\frac{32}{B^2}\right)+\frac{\Delta}{L}.
    \label{eq:bitsparse-bits-per-element}
\end{equation}
For negligible padding and per-tensor overhead, $\alpha\simeq1$ and $\Delta/L\simeq0$. Substituting $B=64$ gives a metadata cost of $1+32/4096=1+1/128$ bits per entry. Dividing the dense cost $d$ by the resulting cost per entry, for each choice of $c$, gives Equation~\eqref{eq:bitsparse-asymptotic}.
\end{proof}

\section{Fused activation--linear layer} \label{app:algorithm}

\subsection{\texorpdfstring{$\relu$}{ReLU} gradient derivation}
\label{app:relu-gradients}
Using the notation of Section~\ref{sec:relu_networks}, the forward pass is
\begin{equation}
    H=\relu(Z),
    \qquad
    Y=HW^\top.
\end{equation}
Differentiating the loss gives
\begin{equation} \label{eq:relu-backward}
    \frac{\partial\mathcal{L}}{\partial W} = G^\top H,
    \qquad
    \frac{\partial\mathcal{L}}{\partial H} = GW,
    \qquad
    \frac{\partial\mathcal{L}}{\partial Z}
    =\frac{\partial\mathcal{L}}{\partial H}\odot \relu'(Z)
    =\frac{\partial\mathcal{L}}{\partial H}\odot\mathbb{I}[Z>0],
\end{equation}
where we define $\relu'(0)=0$. The gradients with respect to $Z$ and $W$ depend only on the positive values of $H$ and their positions, which are cached from the forward pass.

\subsection{Forward and backward algorithm}
Algorithm~\ref{alg:relu-linear} sets out the forward and backward passes of the fused
$\relu$--linear autograd function. The forward pass evaluates the activation in place,
compresses it, and then performs a dense matrix multiplication, so the forward computation
is unchanged from the dense baseline. Only the cached activation is compressed. The backward pass decompresses the activation and computes the weight gradient directly, and applies the
activation derivative from the stored bitmask alone, so the dense activation is never
reconstructed for the input gradient.
\begin{algorithm}[H]
    \caption{BitSparse fused $\relu$--linear layer.}
    \label{alg:relu-linear}
    \begin{algorithmic}[1]
        \Require preactivation $Z \in \mathbb{R}^{M \times N}$, weight $W \in \mathbb{R}^{D \times N}$
        \Ensure output $Y \in \mathbb{R}^{M \times D}$, cache holding the compressed activation
        \Procedure{Forward}{$Z, W$}
            \State $H \gets \relu(Z)$ \Comment{in place}
            \State cache $W$
            \State $H_{\mathrm{s}} \gets \Call{Compress}{H}$
            \Comment{per-tile bitmask, compact positive values, and prefix offsets; optional sign-bit packing}
            \State $Y \gets H W^\top$ \Comment{dense matrix multiplication}
        \EndProcedure
        \Statex
        \Procedure{Backward}{$G$} \Comment{$G = \partial\mathcal{L} / \partial Y \in \mathbb{R}^{M \times D}$, the upstream gradient}
            \State retrieve $W$ and $H_{\mathrm{s}}$ from the cache
            \State $\partial W \gets \Call{SparseProduct}{G^\top, H_{\mathrm{s}}}$
            \Comment{dense--sparse product against the compressed activation}
            \If{$\partial Z$ is required}
                \State $\partial H \gets G W$ \Comment{dense matrix multiplication}
                \State $\partial Z \gets \Call{MaskWithBitmask}{\partial H, \operatorname{bitmask}(H_{\mathrm{s}})}$
                \Comment{in place, applying $\relu'(Z)$ from the bitmask alone}
            \Else
                \State $\partial Z \gets \varnothing$
            \EndIf
            \State release $H_{\mathrm{s}}$
        \EndProcedure
    \end{algorithmic}
\end{algorithm}
The stored values are BF16 or FP8. For FP8 the per-tensor scale is carried in the sparse
metadata and applied to the matrix products; sign-bit packing is an independent option that
removes the redundant sign bit, as described in Section~\ref{sec:method-packing}. The
$\relu^2$ case follows the a similar structure, with the stored values squared when reconstruction is required and \texttt{MaskWithBitmask} mask replaced with a masked multiplication.

\section{Experimental setup} \label{app:implementation}
\subsection{System setup} 
All experiments were implemented using PyTorch 2.14 with Triton 3.7.1 in Python 3.14. All experiments were run on an NVIDIA RTX 3090 24GB with the exception of the FP8 MLP and Nemotron-H experiments, which were run on an RTX 5090 32GB, for the FP8 support and extra VRAM. CompAct used the official implementation. The nvCOMP baseline uses version 5.3.0.16. 

Random seed 0 is used for all experiments. Timings are averaged over 5 runs, after warmup. 

\subsection{Baselines} \label{app:baselines}
\paragraph{Activation Checkpointing.} Activation checkpointing \cite{act_checkpointing} reduces the amount of memory required by recomputing activations in the backward pass, rather than storing them in the forward pass. This trades off storage for computation, since certain layers will have computations repeated twice in a training step. PyTorch supports activation checkpointing natively. In our method, we checkpoint the full feedforward block, including the RMS norm layer.

\paragraph{Lossless compression.} Lossless compression can be used to reduce the size of the saved activations. The activations are compressed before being cached, and decompressed in the backward pass when needed. The nvCOMP \cite{nvcomp} library provides highly optimized GPU-native codecs for this task, including LZ4 \cite{lz4}, Zstd \cite{zstd}, Bitcomp, and Cascaded. LZ4 uses dictionary matching to replace repeated byte sequences with short references. Zstd similarly exploits repeated patterns, while also using entropy coding to improve compression ratio. Bitcomp is a proprietary format designed for numerical data and can efficiently encode sparse values. Cascaded combines techniques such as run-length encoding, delta encoding, and bit-packing. While these are general lossless compression codecs, they benefit greatly from activation sparsity, since the many zeros are easy to compress.

\paragraph{Compressed Sparse Row.} A standard format for sparse matrices is the CSR format. This stores the nonzero values, their column indices and a cumulative counter of row indices, and is well optimized for GPU multiplication, with native support on our GPUs. This method is however suboptimal for this setting, where sparsity is around $0.5$ to $0.85$, where the overhead from indices is significant. Currently, there is no FP8 support for CSR tensors. 

\paragraph{CompAct.} Lossy compression has been considered. We compare against a recent method CompAct \cite{CompAct}, which uses random Gaussian matrices to compress the hidden pre-nonlinearity activations into a lower dimension space. This method exploits the low-rank nature of activations. Training using CompAct demonstrates lower memory, with a small increase in training loss due to its lossy nature. The amount of compression can be tuned. We use CompAct-$f$ to denote a compression factor of $f$, meaning its compression dimension is $1/f$ times the activation. A higher $f$ gives lower memory and faster compression, at the cost of lower accuracy.

\subsection{MLP training benchmark setup} \label{app:mlp-training}

\paragraph{Architecture and workload.}
The comparisons in Section~\ref{sec:mlp-training} use a stack of eight residual MLP blocks. Each block applies RMS normalization, a bias-free linear expansion from $4096$ to $21{,}504$ features (the same as Nemotron-H), a $\relu$ or $\relu^2$ activation, and a bias-free projection back to $4096$ features before adding the residual connection. Inputs are synthetic standard-normal batches with $16{,}000$ rows and $4096$ features. Parameters and activations use BF16.

In this test, parameters are not updated since we are interested in the performance and memory for computing the gradients, not the optimizer. Each lossless method is validated by checking that its gradients match the dense baseline up to floating-point rounding.

\paragraph{FP8 activation storage and matrix multiplication} \label{sec:method-fp8}
For the FP8 experiment we use FP8 only for the matrix multiplications and for the activation cache; the model's parameters, inputs, and gradients remain in BF16. In each product the BF16 operand is cast to FP8 on the fly, so both the activation and the weight are represented in FP8-E4M3 for the GEMM, which accumulates in FP32 and returns a BF16 result. The activation cache stores the FP8 values themselves, together with the scale needed to interpret them. The same FP8 scaling is used as in Nemotron-H, with a tensor-wise scale $\gamma$:
\begin{equation}
    \gamma = \max\left(
        \frac{\max_{i,j}|H_{ij}|}{q_{\max}},
        10^{-9}
    \right),
    \qquad
    H_{\mathrm{FP8}} =
    \operatorname{cast}_{\mathrm{FP8}}\left(\frac{H}{\gamma}\right),
    \label{eq:fp8-scale}
\end{equation}
where $q_{\max}$ is the largest finite value of the selected FP8 format, so $\gamma$ places the largest magnitude at the top of the FP8 range. We use a separate tensor-wise scale for each operand, the weight and the activation. For the $\relu^2$ path, conversion is applied to the saved $\relu(Z)$ values. The implementation currently converts tensors to E4M3 FP8. The activation scale is stored alongside the sparse cache so that the values can be interpreted and scaled back during reconstruction. The weight scale is transient, recomputed each time the weight is cast, so keeping weights in BF16 and casting them per product adds no persistent parameter storage.
Dense FP8 products use PyTorch's scaled matrix multiplication primitive with tensor-wise scales. This scaling ensures the full dynamic range of FP8 values is used. Our method is also compatible with more advanced formats such as MXFP8 which use blockwise multiplication \cite{mx_fp}. 

\paragraph{Setup details.}
All methods use the same weights and inputs, fixed using a random seed. In order to generate sparse activations, the weights of the networks are shifted until the activations have the desired sparsity level. Every method uses a custom backward implementation which fuses the RMS norm layer via $\texttt{torch.autograd.Function}$. This is to ensure maximum efficiency since we found the default PyTorch implementation to be inefficient in its activation caching. In particular, fusing the RMS norm layer reduced the memory consumption of every method. 

\paragraph{Timing and memory.}
Each configuration is measured five times. Each measurement excludes one warmup forward/backward pass and is followed by three timed steps that are averaged for the displayed time. GPU execution is synchronized at the timing boundaries. We record peak GPU memory allocated during the final measured pass, including model parameters, activations, and temporary tensors, rather than only the compressed activation footprint.

\subsection{Language-model memory and runtime setup}  \label{app:llm-memory-runtime}
\paragraph{Models and data.}
We evaluate two models. The first is the 162M-parameter Modded-NanoGPT checkpoint from Section~\ref{sec:llm-sparsity}, trained by us according to the Track 3 baseline and evaluated at step $3300$ ($12$ layers, $768$ hidden dimensions, $50{,}304$-token vocabulary). The second is NVIDIA's official pretrained Nemotron-H-8B-Base-8K model. Both run in BF16 on a single GPU: an RTX 3090 with $24$ GiB for Modded-NanoGPT and an RTX 5090 with $32$ GiB for Nemotron-H. Models are evaluated on text from the FineWeb10B validation dataset \cite{fineweb}, truncated to the target length.
\paragraph{Batch size.}
Modded-NanoGPT is evaluated with a fixed batch of $4$ sequences, sweeping the sequence length from $256$ to $16{,}000$ tokens, so the number of tokens per step ranges from $1{,}024$ to $64{,}000$. Nemotron-H is evaluated one sequence at a time at each length.

\paragraph{Configurations.}
Each model is measured with dense activation storage, activation checkpointing on every feed-forward block, BitSparse, and BitSparse with sign-bit packing. Both models use $\relu^2$ feed-forward activations. BitSparse compresses each activation at the sparsity the model exhibits naturally (Section~\ref{sec:llm-sparsity}). We modify both models to use the same fused RMS normalization, and chunk the Nemotron-H vocabulary projection head, to keep incidental memory overhead low (Appendix~\ref{app:nemotron_memory}). 

\paragraph{Workload.}
Each step performs a forward pass and the backward pass of the language-model loss over that step's tokens. Model parameters are never updated and gradients are released as they are produced, so no full-model parameter-gradient buffer or optimizer state is retained. Transient parameter gradients still contribute to peak memory.

\paragraph{Timing and memory.}
The timing and peak memory consumption in a training step are measured. Times are averaged per step over repeated iterations after warmup. Memory is tracked using \nolinkurl{torch.cuda.max_memory_allocated("cuda")}. As in Appendix~\ref{app:mlp-training}, peak memory covers model parameters, activations, and temporary tensors. Warmup steps are used to ensure accurate timing.

\section{Additional results}
\label{app:additional}

\subsection{Activation sparsity measurement in LLMs} \label{app:llm-sparsity}

The experiments in Section~\ref{sec:llm-sparsity} characterize feed-forward activation sparsity during Modded-NanoGPT training and in a pretrained Nemotron-H model. We evaluate both models on contiguous sequences from the FineWeb10B validation set, with fixed weights and without BitSparse compression. Both experiments use $256$ evaluation batches, but the batch composition differs: the Modded-NanoGPT runs place four $4096$-token sequences in each batch, while the Nemotron-H runs process a single sequence per batch.

\paragraph{Measurement and aggregation.}
Following Section~\ref{sec:background-entropy}, we report activation sparsity $\rho$; larger values indicate a greater fraction of zeros. For each layer $\ell$ and evaluation batch $j$, we measure $\rho_{\ell,j}$ over all tokens and feature dimensions of the $\relu^2$ output, counting exact zeros without a magnitude threshold. With $Q$ evaluation batches and measured layers $\mathcal{I}$, we compute
\begin{align}
    \overline{\rho}_{\ell}
        &= \frac{1}{Q}\sum_{j=1}^{Q}\rho_{\ell,j},
    &\rho_{\ell,\min}
        &= \min_{1\leq j\leq Q}\rho_{\ell,j}, \\
    \rho_{\mathrm{batch},\min}
        &= \min_{1\leq j\leq Q}
           \frac{1}{|\mathcal{I}|}\sum_{\ell\in\mathcal{I}}\rho_{\ell,j}.
    \label{eq:sparsity-batch-aggregation}
\end{align}
These are the per-layer mean sparsity, the per-layer minimum sparsity, and the minimum batch-wise mean sparsity across layers, respectively. The last quantity captures the densest observed batch averaged over layers; individual layers may reach their minima on different batches. Layers receive equal weight. All minima refer to the evaluated batches, rather than bounds on unseen inputs. Measurements are taken after squaring, so finite-precision underflow may produce additional zeros relative to the unsquared activations saved by BitSparse.

\paragraph{Modded-NanoGPT during training.}
We evaluate all $12$ feed-forward layers at training steps $0,300,\ldots,3300$. Each checkpoint is evaluated on the same $256$ batches, each containing four sequences of $4096$ tokens, for a total of $1{,}024$ sequences and $4{,}194{,}304$ input-token positions. Four preliminary warmup batches are excluded. The model uses BF16 activations. We report each layer's mean and minimum activation sparsity to show how sparsity develops during training and how much it varies across batches. In the training-evolution plot, solid and dashed lines denote the mean and minimum, respectively, with a shared color for each layer.

\begin{figure}[t]
    \centering
    \includegraphics[width=0.8\linewidth]{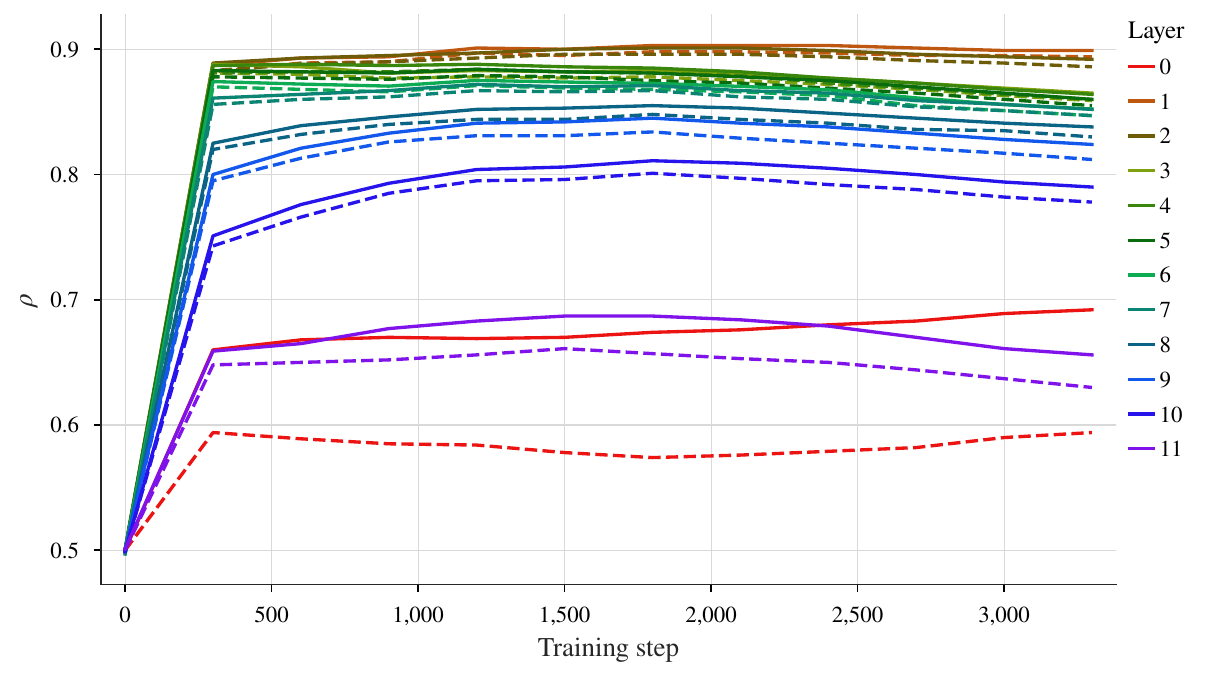}
    \caption{Activation sparsity during Modded-NanoGPT training for all 12 feed-forward layers. Solid lines show per-layer mean sparsity and dashed lines show per-layer minimum sparsity over 256 evaluation batches. The vertical axis reports the zero fraction $\rho$, so higher values indicate greater sparsity.}
    \label{fig:train-sparsity}
\end{figure}

\paragraph{Pretrained Nemotron-H.}
We evaluate the pretrained Nemotron-H model in BF16. Measurements cover all feed-forward MLP blocks in the hybrid architecture, excluding attention and Mamba blocks; layer indices refer to positions in the full model. Evaluation comprises $256$ sequences of $4096$ tokens, processed one sequence per batch, for $1{,}048{,}576$ input-token positions. One warmup pass is excluded from the statistics. We report per-layer mean and minimum sparsities, together with the mean sparsity across layers, the smallest individual layer--batch sparsity, and the minimum batch-wise mean sparsity defined above.

\begin{table}[t]
    \centering
    \caption{Activation sparsity $\rho$ for selected Nemotron-H-8B-Base-8K layers over 256 validation batches. Per-layer minima correspond to each layer's densest observed batch. The overall row reports mean sparsity across all measured layers and batches, and the minimum batch-wise mean sparsity across all measured layers. Higher values indicate greater sparsity.}
    \label{tab:nemotron-sparsity}
    \small
    \begin{tabular}{rrr}
        \toprule
        Layer & Mean sparsity & Minimum sparsity \\
        \midrule
         1 & 0.9718 & 0.9673 \\
         3 & 0.9335 & 0.9235 \\
        10 & 0.9590 & 0.9543 \\
        27 & 0.8649 & 0.8408 \\
        32 & 0.9160 & 0.9023 \\
        47 & 0.9616 & 0.9383 \\
        51 & 0.9357 & 0.8789 \\
        \midrule
        Overall & 0.9273 & 0.9176 \\
        \bottomrule
    \end{tabular}
\end{table}

\FloatBarrier
\Needspace{0.55\textheight}
\subsection{Full training comparison}
\label{app:nanogpt-microbatch}
Table~\ref{tab:nanogpt-microbatch} compares the memory use and step time of standard and BitSparse training at microbatch sizes of $32$ and $64$. Figure~\ref{fig:nanogpt-train-loss} shows the validation loss over the full training run.

\begin{table}[htbp]
    \centering
    \caption{Memory usage and time per training step for Modded-NanoGPT at two microbatch sizes. OOM denotes an out-of-memory failure.}
    \label{tab:nanogpt-microbatch}
    \small
    \begin{tabular}{@{}lrrr@{}}
        \toprule
        Method & Microbatch size & Memory / MiB & Time / s \\
        \midrule
        Normal & 32 & 15229.6 & 7.428 \\
        Normal & 64 & OOM & OOM \\
        \midrule
        BitSparse & 32 & 13174.7 & 7.668 \\
        BitSparse & 64 & 21822.1 & 7.416 \\
        \bottomrule
    \end{tabular}
\end{table}

\begin{figure}[htbp]
    \centering
    \includegraphics[width=0.75\linewidth]{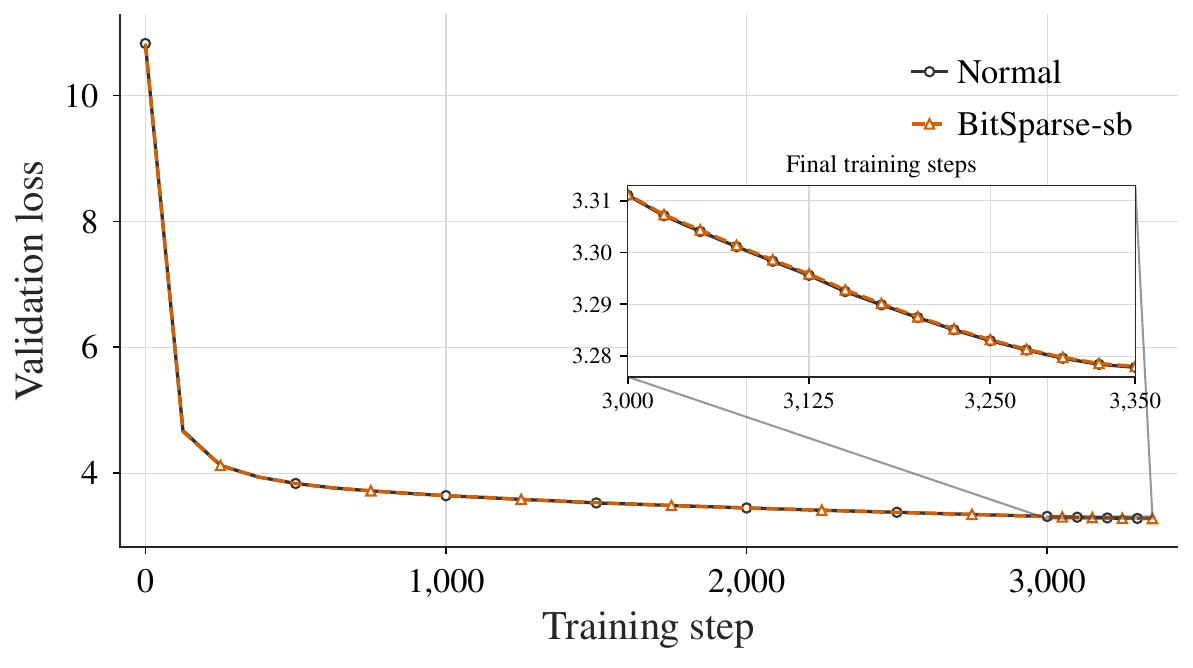}
    \caption{Validation loss during standard (Normal) and BitSparse-sb training of Modded-NanoGPT. The inset zooms in on steps $3{,}000$--$3{,}350$.}
    \label{fig:nanogpt-train-loss}
\end{figure}

\FloatBarrier
\subsection{MLP benchmark results} \label{app:mlp-additional}

Table~\ref{tab:mlp-training-relu} reports the $\relu$ MLP benchmark at sparsity $\rho=0.5$ and $0.9$ for both BF16 and FP8. Table~\ref{tab:mlp-training-relu2} gives the corresponding $\relu^2$ results discussed in the main text.

\begin{table}[htbp]
    \centering
    \caption{Peak allocated memory and time per step for the $\relu$ MLP benchmark, using BF16 and FP8 activations at sparsity $\rho=0.5$ and $0.9$. CompAct is a lossy reference. Bold and italic values indicate the lowest and second-lowest lossless results. Dashes indicate unavailable results.}
    \label{tab:mlp-training-relu}
    \small
    \setlength{\tabcolsep}{2.5pt}
    \begin{tabular}{@{}lrrrrrrrr@{}}
        \toprule
        & \multicolumn{4}{c}{Memory / MiB} & \multicolumn{4}{c}{Time / ms} \\
        \cmidrule(lr){2-5}\cmidrule(l){6-9}
        & \multicolumn{2}{c}{BF16} & \multicolumn{2}{c}{FP8} & \multicolumn{2}{c}{BF16} & \multicolumn{2}{c}{FP8} \\
        Method & $0.5$ & $0.9$ & $0.5$ & $0.9$ & $0.5$ & $0.9$ & $0.5$ & $0.9$ \\
        \midrule
        Dense                 & 11046 & 10039 & 8500 & 8500 & \textbf{1748} & \textbf{1748} & \textbf{323} & 333 \\
        BitSparse             & 8800 & \textit{5742} & 7165 & 6495 & \textit{1794} & \textit{1776} & \textit{324} & \textbf{328} \\
        BitSparse-sb          & \textit{8631} & \textbf{5715} & \textit{7030} & \textit{6443} & 1799 & 1786 & 330 & \textit{331} \\
        Checkpoint            & \textbf{6928} & 6057 & \textbf{6105} & \textbf{6105} & 2047 & 2047 & 441 & 452 \\
        LZ4                   & 10721 & 7324 & 8428 & 7700 & 3334 & 2384 & 644 & 544 \\
        Zstd                  & 9707 & 6942 & 7727 & 7188 & 4702 & 3046 & 934 & 691 \\
        Cascaded              & 11827 & 7533 & 8825 & 8281 & 1910 & 1943 & 424 & 411 \\
        Bitcomp               & 11946 & 7886 & 8534 & 8153 & 1855 & 1919 & 414 & 404 \\
        CSR                   & -- & 15984 & -- & -- & -- & 2014 & -- & -- \\
        \midrule
        CompAct-2             & 11486 & 11486 & 10078 & 10078 & 2653 & 2653 & 485 & 485 \\
        CompAct-4             & 9955 & 9955 & 9238 & 9238 & 1976 & 1976 & 374 & 374 \\
        CompAct-8             & 9190 & 9189 & 8854 & 8854 & 1636 & 1636 & 324 & 324 \\
        \bottomrule
    \end{tabular}
\end{table}

\paragraph{Batch-size variation.}
Tables~\ref{tab:mlp-batch-size-relu2-50}, \ref{tab:mlp-batch-size-relu-50}, and~\ref{tab:mlp-batch-size-relu-90} complete the batch-size sweep of Table~\ref{tab:mlp-batch-size} for its three remaining activation and sparsity settings: $\relu^2$ at $\rho=0.5$, and $\relu$ at $\rho=0.5$ and $0.9$. The saving scales with sparsity and with the size of the dense baseline, not with the activation alone. At $32{,}000$ rows BitSparse reduces peak memory by $36\%$ for $\relu^2$ at $\rho=0.5$, $26\%$ for $\relu$ at the same sparsity, and $50\%$ for $\relu$ at $\rho=0.9$. The larger reduction for $\relu^2$ comes from the reduced transient storage used to compute activations in the backward pass from our optimized fused Triton kernel. The crossover at which sign-bit packing becomes the smallest configuration depends on both factors: it occurs at $8{,}000$ rows for both activation functions at $\rho=0.9$, and not at all at $\rho=0.5$. Runtime stays within about $1.5\%$ of dense storage at $32{,}000$ rows in all three settings. 

\begin{table}[htbp]
    \centering
    \caption{Effect of batch size on peak allocated memory and time per step for the $\relu^2$ MLP at sparsity $\rho=0.5$. Bold and italic values indicate the lowest and second-lowest results, respectively.}
    \label{tab:mlp-batch-size-relu2-50}
    \small
    \setlength{\tabcolsep}{3pt}
    \begin{tabular}{@{}rrrrrrrrr@{}}
        \toprule
        Batch & \multicolumn{4}{c}{Memory / MiB} & \multicolumn{4}{c}{Time / ms} \\
        \cmidrule(lr){2-5}\cmidrule(l){6-9}
        size & Dense & Checkpoint & BitSparse & BitSparse-sb & Dense & Checkpoint & BitSparse & BitSparse-sb \\
        \midrule
                32 & 3054 & \textbf{2879} & 2883 & \textit{2882} & \textbf{13} & 14 & 14 & \textit{13} \\
               128 & 3094 & \textbf{2899} & 2912 & \textit{2911} & \textbf{19} & 22 & \textit{19} & 20 \\
               512 & 3264 & \textbf{2982} & 3033 & \textit{3028} & \textit{64} & 75 & \textbf{64} & 67 \\
              2000 & 3932 & \textbf{3294} & 3492 & \textit{3471} & \textit{242} & 283 & \textbf{241} & 244 \\
              8000 & 7107 & \textbf{4704} & 5337 & \textit{5253} & \textbf{925} & 1088 & \textit{927} & 938 \\
             32000 & 19811 & \textbf{10704} & 12729 & \textit{12394} & \textit{3636} & 4270 & \textbf{3618} & 3660 \\
        \bottomrule
    \end{tabular}
\end{table}

\begin{table}[htbp]
    \centering
    \caption{Effect of batch size on peak allocated memory and time per step for the $\relu$ MLP at sparsity $\rho=0.5$. Bold and italic values indicate the lowest and second-lowest results, respectively, including ties.}
    \label{tab:mlp-batch-size-relu-50}
    \small
    \setlength{\tabcolsep}{3pt}
    \begin{tabular}{@{}rrrrrrrrr@{}}
        \toprule
        Batch & \multicolumn{4}{c}{Memory / MiB} & \multicolumn{4}{c}{Time / ms} \\
        \cmidrule(lr){2-5}\cmidrule(l){6-9}
        size & Dense & Checkpoint & BitSparse & BitSparse-sb & Dense & Checkpoint & BitSparse & BitSparse-sb \\
        \midrule
                32 & 3054 & \textbf{2878} & 2883 & \textit{2882} & \textbf{12} & 15 & 13 & \textit{13} \\
               128 & 3094 & \textbf{2894} & 2912 & \textit{2911} & \textbf{18} & 21 & \textit{19} & 19 \\
               512 & 3264 & \textbf{2960} & 3033 & \textit{3028} & \textbf{62} & 73 & \textit{64} & 65 \\
              2000 & 3889 & \textbf{3212} & 3490 & \textit{3470} & \textbf{234} & 273 & \textit{236} & 239 \\
              8000 & 6451 & \textbf{4376} & 5334 & \textit{5251} & \textbf{887} & 1037 & \textit{905} & 912 \\
             32000 & 17186 & \textbf{9392} & 12717 & \textit{12383} & \textbf{3478} & 4069 & \textit{3530} & 3571 \\
        \bottomrule
    \end{tabular}
\end{table}

\begin{table}[htbp]
    \centering
    \caption{Effect of batch size on peak allocated memory and time per step for the $\relu$ MLP at sparsity $\rho=0.9$. Bold and italic values indicate the lowest and second-lowest results, respectively.}
    \label{tab:mlp-batch-size-relu-90}
    \small
    \setlength{\tabcolsep}{3pt}
    \begin{tabular}{@{}rrrrrrrrr@{}}
        \toprule
        Batch & \multicolumn{4}{c}{Memory / MiB} & \multicolumn{4}{c}{Time / ms} \\
        \cmidrule(lr){2-5}\cmidrule(l){6-9}
        size & Dense & Checkpoint & BitSparse & BitSparse-sb & Dense & Checkpoint & BitSparse & BitSparse-sb \\
        \midrule
          32 & 3054 & \textbf{2878} & 2879 & \textbf{2878} & \textbf{12} & 13 & \textit{12} & 12 \\
         128 & 3094 & \textbf{2894} & \textit{2895} & \textit{2895} & \textbf{18} & 21 & \textit{19} & 19 \\
         512 & 3264 & \textbf{2960} & 2966 & \textit{2965} & \textbf{61} & 72 & \textit{63} & 63 \\
        2000 & 3889 & \textbf{3212} & 3233 & \textit{3230} & \textbf{232} & 271 & \textit{235} & 237 \\
        8000 & 6451 & 4376 & \textit{4307} & \textbf{4293} & \textbf{893} & 1040 & \textit{904} & 908 \\
       32000 & 17186 & 9392 & \textit{8607} & \textbf{8555} & \textbf{3482} & 4075 & \textit{3527} & 3546 \\
        \bottomrule
    \end{tabular}
\end{table}

\FloatBarrier
\subsection{Codec implementation ablation}
\label{app:codec-implementation-ablation}

Table~\ref{tab:codec-implementation-ablation} compares encoding and decoding with a pure PyTorch implementation of BitSparse and our custom Triton kernels. Both versions do the same operation, producing bit-identical outputs. The PyTorch version is optimized extensively using our LLM agent to ensure a fair comparison. Both operate on a $65{,}536\times8{,}192$ BF16 tensor ($1$ GiB in dense form). We report peak allocated GPU memory and average runtime for each operation. 

\begin{table}[htbp]
    \centering
    \caption{BitSparse codec implementation ablation on a $65{,}536\times8{,}192$ BF16 tensor. The baseline implements BitSparse in pure PyTorch; the proposed implementation uses custom Triton kernels. Lower peak memory and runtime are better.}
    \label{tab:codec-implementation-ablation}
    \small
    \begin{tabular}{llrr}
        \toprule
        Operation & Implementation & Peak memory / MiB & Time / ms \\
        \midrule
        Encode & Pure PyTorch & 2560.00 & 28.010 \\
               & Triton (BitSparse) & \textbf{1600.50} & \textbf{3.690} \\
        \midrule
        Decode & Pure PyTorch & 2048.53 & 17.168 \\
               & Triton (BitSparse) & \textbf{1600.50} & \textbf{2.170} \\
        \bottomrule
    \end{tabular}
\end{table}

\FloatBarrier
\subsection{Activation backward ablation}
\label{app:relu-backward-ablation}

Table~\ref{tab:relu-backward-ablation} isolates the pointwise activation derivative. The BF16 input shares the size of our MLP tests ($16{,}000$ rows and $21{,}504$ features) and has around 50\% sparsity. The PyTorch baseline is optimized to use fused inplace operations whenever possible for fast speed and as low a memory overhead as possible. Meanwhile, BitSparse fuses reconstruction of the required values and the derivative into an in-place update, using compact BF16 values. Peak allocated memory includes the gradient, retained activation representation, and any backward temporaries. Results are averaged over $32$ iterations after $5$ warmup iterations. 

\begin{table}[htbp]
    \centering
    \caption{Activation backward ablation on a $16{,}000\times21{,}504$ BF16 tensor with approximately $50\%$ sparsity. Peak memory includes the saved representation, gradient, and backward temporaries; time covers only the activation-derivative computation. Lower is better.}
    \label{tab:relu-backward-ablation}
    \small
    \begin{tabular}{llrr}
        \toprule
        Activation & Method & Peak memory / MiB & Time / ms \\
        \midrule
        $\relu$ & PyTorch & 1312.50 & 2.467 \\
                 & BitSparse & \textbf{698.25} & \textbf{2.043} \\
        \midrule
        $\relu^2$ & PyTorch & 2625.00 & 4.200 \\
                   & BitSparse & \textbf{1025.70} & \textbf{2.552} \\
        \bottomrule
    \end{tabular}
\end{table}

\FloatBarrier
\subsection{Shared value buffer ablation}
\label{app:buffer-ablation}

Table~\ref{tab:buffer-ablation} compares BitSparse with and without the shared value buffer described in Section~\ref{sec:method-buffer}. The comparison uses the same eight-block MLP setup as the main benchmark (Appendix~\ref{app:mlp-training}), with $\relu^2$ activations at sparsity $\rho=0.5$, and varies the input size from $32$ to $1024$ tokens. The buffer reduces runtime most at the smallest input size, from $12.563$ to $11.904$ ms at $32$ tokens. At $256$--$1024$ tokens, the runtime differences are below $0.4\%$ in magnitude, with a small slowdown at $256$ tokens rather than a consistent improvement.

\begin{table}[htbp]
    \centering
    \caption{Effect of the shared value buffer on BitSparse runtime as input size increases. Times retain three decimal places to show the small differences between configurations. Lower is better.}
    \label{tab:buffer-ablation}
    \small
    \begin{tabular}{rrr}
        \toprule
        Tokens & Without buffer / ms & With buffer / ms \\
        \midrule
        32   & 12.563  & 11.904 \\
        64   & 17.622  & 17.136 \\
        128  & 20.150  & 19.954 \\
        256  & 37.973  & 38.097 \\
        512  & 66.539  & 66.308 \\
        1024 & 130.807 & 130.518 \\
        \bottomrule
    \end{tabular}
\end{table}

\FloatBarrier
\subsection{Layer-count trade-offs}
\label{app:mlp-layer-pareto}

Figures~\ref{fig:mlp-layer-pareto-relu2-50}, \ref{fig:mlp-layer-pareto-relu-50}, and~\ref{fig:mlp-layer-pareto-relu-90} give the three remaining activation and sparsity settings for the layer-count sweep of Figure~\ref{fig:mlp-layer-pareto}.

Whenever at least one layer is compressed or checkpointed, BitSparse completes the step faster than checkpointing, because checkpointing pays for recomputation while BitSparse adds only a codec pass. Checkpointing nevertheless attains the lowest VRAM in the two $\rho=0.5$ settings, and the margin there is large: for $\relu$ at $8$ layers, BitSparse uses $29\%$ more memory than checkpointing at $\rho=0.5$, but $5\%$ less at $\rho=0.9$. Both $\rho=0.9$ settings therefore cross below checkpointing at the largest layer count.

The steepest fall in VRAM occurs at the final layer count, and it is far sharper for $\relu^2$ than for $\relu$: moving from $7$ to $8$ compressed layers lowers BitSparse's VRAM by $23\%$ at $\rho=0.9$ and $15\%$ at $\rho=0.5$ for $\relu^2$, but by only $7\%$ and $1\%$ for $\relu$. Sign-bit packing tracks BitSparse closely in every setting, saving slightly more memory at a small cost in time.

\begin{figure}[htbp]
    \centering
    \begin{minipage}[t]{0.48\linewidth}
        \centering
        \includegraphics[width=\linewidth]{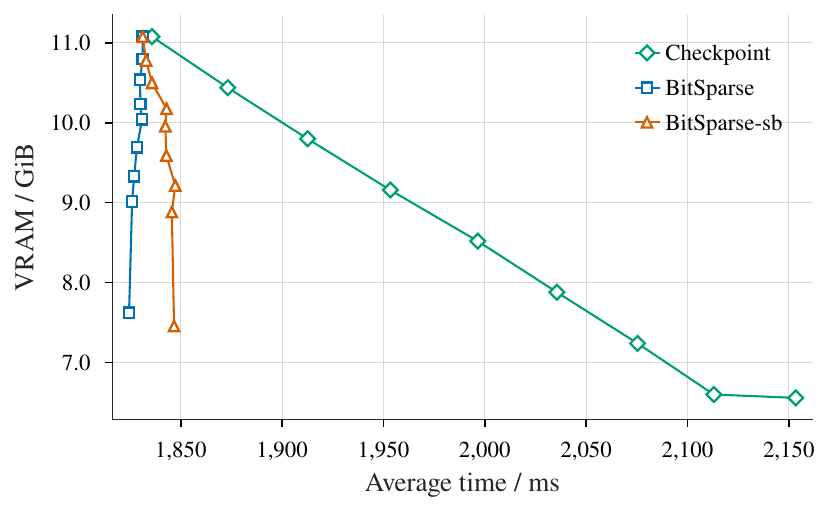}
        \caption{Memory--runtime trade-off for the $\relu^2$ MLP at sparsity $\rho=0.5$. Conventions follow Figure~\ref{fig:mlp-layer-pareto}.}
        \label{fig:mlp-layer-pareto-relu2-50}
    \end{minipage}\hfill
    \begin{minipage}[t]{0.48\linewidth}
        \centering
        \includegraphics[width=\linewidth]{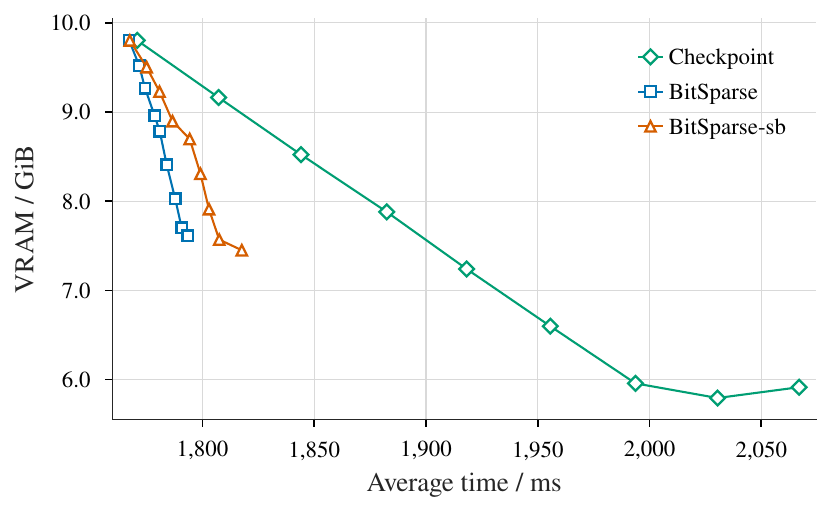}
        \caption{Memory--runtime trade-off for the $\relu$ MLP at sparsity $\rho=0.5$. Conventions follow Figure~\ref{fig:mlp-layer-pareto}.}
        \label{fig:mlp-layer-pareto-relu-50}
    \end{minipage}

    \medskip
    \begin{minipage}[t]{0.48\linewidth}
        \centering
        \includegraphics[width=\linewidth]{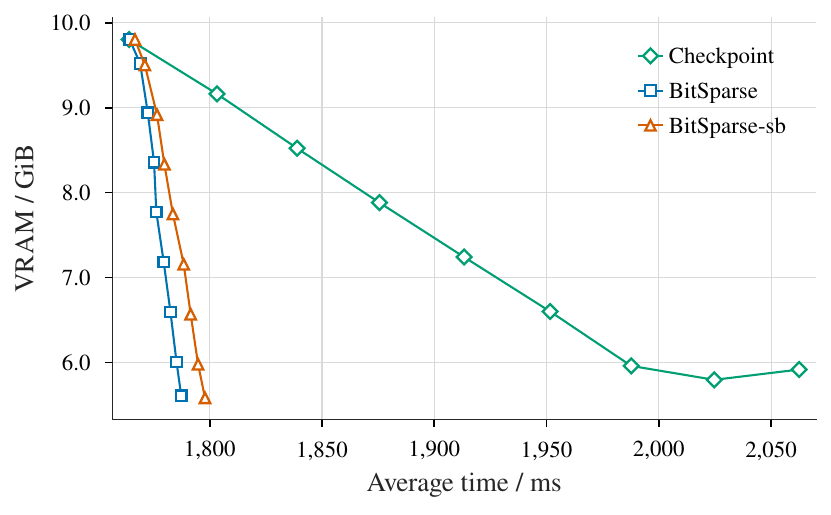}
        \caption{Memory--runtime trade-off for the $\relu$ MLP at sparsity $\rho=0.9$. Conventions follow Figure~\ref{fig:mlp-layer-pareto}.}
        \label{fig:mlp-layer-pareto-relu-90}
    \end{minipage}
\end{figure}

\end{document}